\theoremstyle{plain}
\theoremstyle{definition}
\newtheorem{property}{Property}
\newtheorem{thm}{Theorem}
\newtheorem{lemma}{Lemma}
\newtheorem{asmpt}{Assumption}
\newtheorem{definition}{Definition}
\title{
    Reward-Predictive Clustering
}
\author{\name Lucas Lehnert \email lucas.lehnert@outlook.com \\
        \addr Mila - Quebec AI Institute \\
        D\'epartement d’informatique et Recherche Op\'erationnelle \\
        Universit\'e de Montr\'eal
        \AND
        \name Michael J. Frank \email michael\_frank@brown.edu \\ 
        \addr Department of Cognitive, Linguistic \& Psychological Sciences \\
        Carney Institute for Brain Science \\
        Brown University
        \AND
        \name Michael L. Littman \email michael\_littman@brown.edu \\
        \addr Department of Computer Science\\
        Brown University
}
\begin{document}


\maketitle

\begin{abstract}
    Recent advances in reinforcement-learning research have demonstrated impressive results in building algorithms that can out-perform humans in complex tasks.
    Nevertheless, creating reinforcement-learning systems that can build abstractions of their experience to accelerate learning in new contexts still remains an active area of research.
    Previous work showed that reward-predictive state abstractions fulfill this goal, but have only be applied to tabular settings. 
    Here, we provide a clustering algorithm that enables the application of such state abstractions to deep learning settings, providing compressed representations of an agent's inputs that preserve the ability to predict sequences of reward.
    A convergence theorem and simulations show that the resulting reward-predictive deep network maximally compresses the agent's inputs, significantly speeding up learning in high dimensional visual control tasks.
    Furthermore, we present different generalization experiments and analyze under which conditions a pre-trained reward-predictive representation network can be re-used without re-training to accelerate learning---a form of systematic out-of-distribution transfer.
\end{abstract}

\section{Introduction}

Recent advances in reinforcement learning (RL)~\citep{sutton2018rlbook} have demonstrated impressive results, outperforming humans on a range of different tasks~\citep{silver2016alphago,silver2017alghagozero,minh2013dqn}.
Despite these advances, the problem of building systems that can re-use knowledge to accelerate learning---a characteristic of human intelligence---still remains elusive.
By incorporating previously learned knowledge into the process of finding a solution for a novel task, intelligent systems can speed up learning and make fewer mistakes.
Therefore, efficient knowledge re-use is a central, yet under-developed, topic in RL research.

We approach this question through the lens of representation learning.
Here, an RL agent constructs a representation function to compress its high-dimensional observations into a lower-dimensional latent space.
This representation function allows the system to simplify complex inputs while preserving all information relevant for decision-making.
By abstracting away irrelevant aspects of task, an RL agent can efficiently generalize learned values across distinct observations, leading to faster and more data-efficient learning~\citep{abel2018sa,franklin2018compositional,momennejad2017successor}.
Nevertheless, a representation function can become specialized to a specific task, and the information that needs to be retained often differs from task to task.
In this context, the question of how to compute an efficient and \emph{re-usable} representation emerges.

In this article, we introduce a clustering algorithm that computes a reward-predictive representation~\citep{lehnert2020reward,lehnert2020lsfm} from a fixed data set of interactions---a setting commonly known as \emph{offline RL}~\citep{levine2020offlinerl}.
A reward-predictive representation is a type of function that compresses high-dimensional inputs into lower-dimensional latent states.
These latent states are constructed such that they can be used to predict future rewards without having to refer to the original high dimensional input.
To compute such a representation, the clustering algorithm processes an interaction data set that is sampled from a single \emph{training task}.
First, every state observation is assigned to the same latent state index.
Then, this single state cluster is iteratively refined by introducting additional latent state indices and re-assigning some state observations to them.
At the end, the assignment between state observations and latent state cluster indices can be used to train a representation network that classifies high-dimensional states into one of the computed latent state cluster.
Later on, the output of this representation network can be used to predict future reward outcomes without referring to the original high-dimensional state.
Therefore, the resulting representation network is a reward-predictive representation.
The presented clustering algorithm is generic: Besides constraining the agent to decide between a finite number of actions, no assumptions about rewards or state transitions are made.
We demonstrate that these reward-predictive representation networks can be used to accelerate learning in \emph{test tasks} that differ in both transition and reward functions from those used in the training task.
The algorithm demonstrates a form of out-of-distribution generalization because the test tasks require learning a task solution that is novel to the RL agent and does not follow the training data's distribution.
The simulation experiments reported below demonstrate that reward-predictive representation networks comprise a form of abstract knowledge re-use, accelerating learning to new tasks.
To unpack how reward-predictive representation networks can be learned and transferred, we first illustrate the clustering algorithm using different examples and prove a convergence theorem.
Lastly, we present transfer experiments illuminating the question of when the learned representation networks generalize to test tasks that are distinct from the training task in a number of different properties.

\section{Reward-predictive representations}
\label{sec:reward-predictive-representations}

\begin{figure}
    \centering
    \subfigure[
        Column World task
    ]{
        \label{fig:column-world-map}
        \includegraphics[scale=1.]{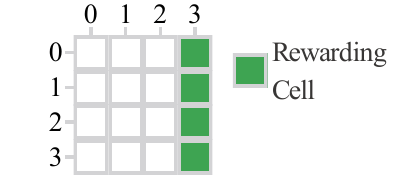}
    }
    \hspace{1cm}
    \subfigure[
        Reward-predictive representation
    ]{
        \label{fig:column-world-colouring}
        \includegraphics[scale=1.]{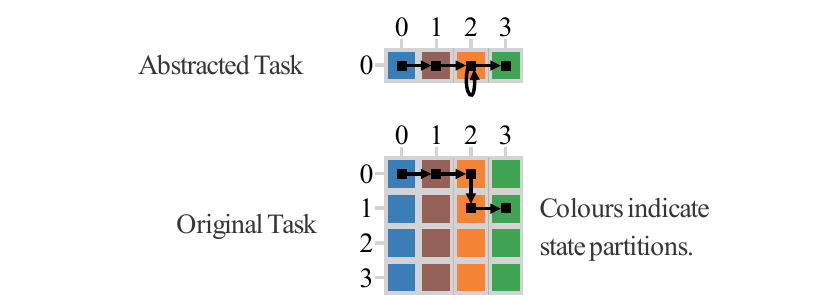}
    }
    \subfigure[
        Clustering Successor Features
    ]{
        \label{fig:column-world-sf}
        \includegraphics[scale=1.]{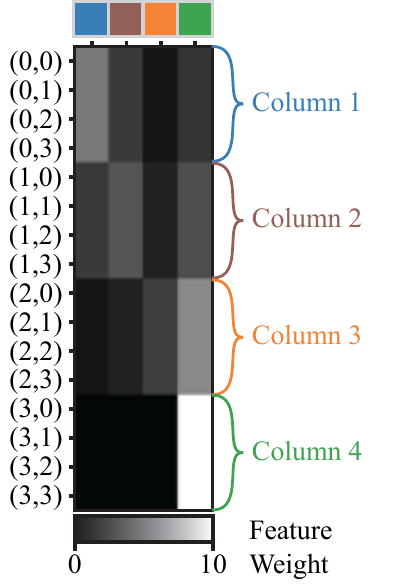}
    }
    \hspace{1cm}
    \subfigure[
        Cluster refinement sequence constructed by algorithm
    ]{
        \label{fig:column-world-refinement}
        \includegraphics[scale=1.]{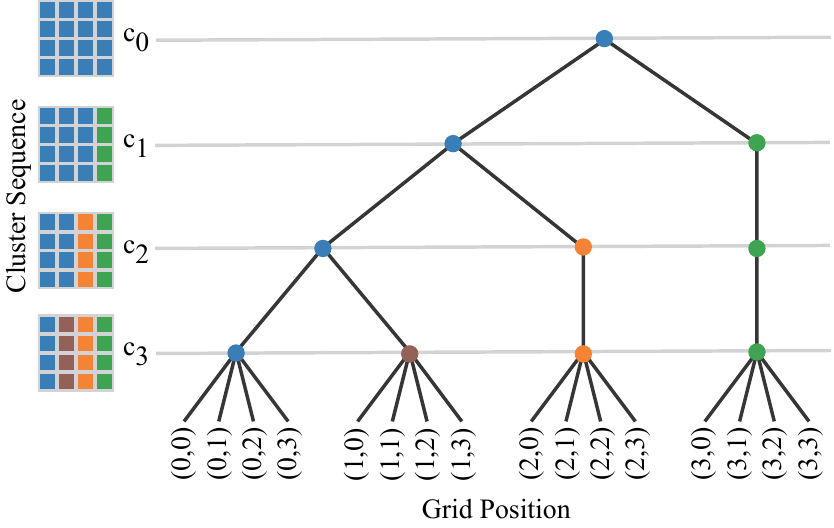}
    }
    \caption{
        Reward-predictive clustering in the Column World task.
        \subref{fig:column-world-map}: 
        In the Column World task the agent can transition between adjacent grid cells by selecting from one of four available actions: move up, down, left, or right.
        A reward of one is given if a green cell is given, otherwise rewards are zero.
        All transitions are deterministic in this task.
        \subref{fig:column-world-colouring}: 
        By colouring every column in a distinct colour, every state of the same column is assigned the same latent state resulting in a $4 \times 1$ abstracted grid world task.
        In this example, an agent only needs to retain which column it is in to predict future rewards and can therefore only use the abstracted task to predict reward sequences for every possible trajectory.
        \subref{fig:column-world-sf}: 
        Matrix plot of all SF vectors $\pmb{\psi}^\pi(s,a)$ for the move ``move right'' action an a policy $\pi$ that selects actions uniformly at random.
        Every row corresponds to the four-dimensional vector for each grid position, as indicated by the y-axis labels.
        For this calculation, the colour of a state $s$ is encoded as a colour index $c(s)$ that ranges from one through four and the state-representation vector is a one-hot bit vector $\pmb{e}_{c(s)}$ where the entry $c(s)$ is set to one and all other entries are set to zero.
        \subref{fig:column-world-refinement}: 
        Colour function sequence $c_0,c_1,c_2,c_3$ generated by the reward-predictive clustering algorithm.
        Initially, all states are merged into a single partition and this partitioning is refined until a reward-predictive representation is obtained.
        The first clustering $c_1$ is obtained by associating states with equal one-step rewards with the same colour (latent state vector).
        Then, the SF matrix shown in~\subref{fig:column-world-sf} is computed for a state representation that associates state with the blue-green colouring as specified by $c_1$.
        The row space of this SF matrix is then clustered again leading to the clustering $c_2$.
        Subsequently, the SF matrix is computed again for the blue-orange-green colouring and the clustering procedure is repeated.
        This method iteratively refines a partitioning of the state space until a reward-predictive representation is obtained.
    }
    \label{fig:column-world-example}
\end{figure}

Mathematically, a reward-predictive representation is a function $\pmb{\phi}$ that maps an RL agent's observations to a vector encoding the compressed latent state.
Figure~\ref{fig:column-world-example} illustrates a reward-predictive representation with an example.
In the Column World task (Figure~\ref{fig:column-world-map}), an RL agent navigates through a grid and receives a reward every time a green cell (right column) is entered.
Formally, this task is modelled as a Markov Decision Process (MDP) $M = \langle \mathcal{S}, \mathcal{A}, p, r \rangle$, where the set of observations or \emph{states} is denoted with $\mathcal{S}$ and the finite set of possible \emph{actions} is denoted with $\mathcal{A}$.
The transitions between adjacent grid cells are modelled with a transition function $p(s,a,s')$ specifying the probability or density function of transitioning from state $s$ to state $s'$ after selecting action $a$.
Rewards are specified by the reward function $r(s,a,s')$ for every possible transition.

To solve this task optimally, the RL agent needs to know which column it is in and can abstract away the row information from each grid cell. 
(For this example we assume that the abstraction is known; the clustering algorithm below will show how it can be constructed from data).
Figure~\ref{fig:column-world-colouring} illustrates this abstraction as a state colouring:
By assigning each column a distinct colour, the $4 \times 4$ grid can be abstracted into a $4 \times 1$ grid.
A representation function then maps every state in the state space $\mathcal{S}$ to a latent state vector (a colour).
Consequently, a trajectory (illustrated by the black squares and arrows in Figure~\ref{fig:column-world-colouring}) is then mapped to a trajectory in the abstracted task.
The RL agent can then associate colours with decisions or reward predictions instead of directly operating on the higher-dimensional $4 \times 4$ grid.

This colouring is a reward-predictive representation, because for any arbitrary start state and action sequence it is possible to predict the resulting reward sequence using only the abstracted task.
Formally, this can be described by finding a function $f$ that maps a start latent state and action sequence to the expected reward sequence:
\begin{equation}
    f(\pmb{\phi}(s),a_1,...,a_n) = \mathbb{E}_p \left[ (r_1,...,r_n) \middle| s, a_1,...,a_n \right]. \label{eq:reward-sequence-prediction}
\end{equation}
The right-hand side of Equation~\eqref{eq:reward-sequence-prediction} evaluates to the expected reward sequence observed when following the action sequence $a_1,...,a_n$ starting at state $s$ in the original task.
The left-hand side of Equation~\eqref{eq:reward-sequence-prediction} predicts this reward sequence using the action sequence $a_1,...,a_n$ and only the latent state $\pmb{\phi}(s)$---the function $f$ does not have access to the original state $s$.
This restricted access to latent states constrains the representation function $\pmb{\phi}$ to be reward-predictive in a specific task: 
Given the representation's output $\pmb{\phi}(s)$ and not the full state $s$, it is possible to predict an expected reward sequence for any arbitrary action sequence using a latent model $f$.
Furthermore, once an agent has learned how to predict reward-sequences for one state, it can re-use the resulting function $f$ to immediately generalize predictions to other states that map to the same latent state, resulting in faster learning.
Of course, a reward-predictive representation always encodes some abstract information about the task in which it was learned; if this information is not relevant in a subsequent task, an RL agent would have to access the original high-dimensional state and learn a new representation. 
We will explore the performance benefits of re-using reward-predictive representations empirically in Section~\ref{sec:experiments}.
The colouring in Figure~\ref{fig:column-world-colouring} satisfies the condition in Equation~\eqref{eq:reward-sequence-prediction}:
By associating green with a reward of one and all other colours with a reward of zero, one can use only a start colour and action sequence to predict a reward sequence and this example can be repeated for every possible start state and action sequence of any length.

\subsection{Improving learning efficiency with successor representations}

To improve an RL agent's ability to generalize its predictions across states, the Successor Representation (SR) was introduced by~\citet{dayan1993successor}.
Instead of explicitly planning a series of transitions, the SR summarizes the frequencies with which an agent visits different future states as it behaves optimally and maximizes rewards.
Because the SR models state visitation frequencies, this representation implicitly encodes the task's transition function and optimal policy.
Consequently, the SR provides an intermediate between model-based RL, which focuses on learning a full model of a task's transition and reward functions, and model-free RL, which focuses on learning a policy to maximize rewards~\citep{momennejad2017successor,russek2017predictive}.
\citet{barreto2017sf} showed that the SR can be generalized to Successor Features (SFs), which compress the high dimensional state space into a lower dimensional one that can still be used to predict future state occupancies. 
They demonstrated how SFs can be re-used across tasks with different reward functions to speed up learning. 
Indeed, SFs---like the SR---only reflect the task's transition function and optimal policy but are invariant to any specifics of the reward function itself.
Because of this invariance, SFs provide an initialization allowing an agent to adapt a previously learned policy to tasks with different reward functions, leading to faster learning in a life-long learning setting~\citep{barreto2018deepsf,barreto2020fastsf,lehnert2017sf,nemecek2021policycaches}. 

However, such transfer requires the optimal policy in the new task to be similar to that of the previous tasks~\citep{lehnert2020lsfm,lehnert2020reward}. 
For example, even if only the reward function changes, but the agent had not typically visited states near the new reward location in the old task, the SR/SF is no longer useful and must be relearned from scratch~\citep{lehnert2017sf}. 
To further improve the invariance properties of SFs,~\citet{lehnert2020lsfm} presented a model that makes use of SFs solely for establishing which states are equivalent to each other for the sake of predicting future reward sequences, resulting in a reward-predictive representation.
Because reward-predictive representations only model state equivalences, removing the details of exactly how (i.e., they are invariant to the specifics of transitions, rewards, and the optimal policy), they provide a mechanism for a more abstract form of knowledge transfer across tasks with different transition and reward functions~\citep{lehnert2020lsfm,lehnert2020reward}.
Formally, SFs are defined as the expected discounted sum of future latent state vectors and
\begin{equation}
    \pmb{\psi}^\pi(s,a) = \mathbb{E}_{a,\pi} \left[ \sum_{t=1}^\infty \gamma^{t-1} \pmb{\phi}(s_t) \middle| s_1=s \right], \label{eq:sf-definition}
\end{equation}
where the expectation in Equation~\eqref{eq:sf-definition} is calculated over all infinite length trajectories that start in state $s$ with action $a$ and then follow the policy $\pi$.
The connection between SFs and reward-predictive representations is illustrated in Figure~\ref{fig:column-world-sf}.
Every row in the matrix plot in Figure~\ref{fig:column-world-sf} shows the SF vector $\pmb{\psi}^\pi(s,a)$ for each of the 16 states of the Column World task.
One can observe that states belonging to the same column have equal SFs.
\citet{lehnert2020lsfm} prove that states that are mapped to the same reward-predictive latent state (and have therefore equal colour) also have equal SFs.
In other words, there exists a bijection between two states that are equivalent in terms of their SF vectors and two states belonging to the same reward-predictive latent state.

As such, previous work~\citep{lehnert2020reward,lehnert2020lsfm,lehnert2018modelfeatures} computes a reward-predictive representation for finite MDPs by optimizing a linear model using a least-squares loss objective.
This loss objective requires the representation function $\pmb{\phi}$ to be linear in the SFs and reward function.
Furthermore, it scores the accuracy of SF predictions using a mean-squared error.
These two properties make it difficult to directly use this loss objective for complex control tasks, because SFs may become very high dimensional and it may be difficult to predict individual SF vectors with near perfect accuracy while also obtaining a representation function that is linear in these predictions.
This issue is further exacerbated by the fact that in practice better results are often obtained by training deep neural networks as classifiers rather than regressors of complex or sparse functions.
Additionally, in this prior approach the degree of compression was specified using a hyper-parameter by a human expert.
Here, we present a clustering algorithm that remedies these three limitations by designing a cluster-refinement algorithm instead of optimizing a parameterized model with end-to-end gradient descent.
Specifically, the refinement algorithm implicitly solves the loss objective introduced by~\citet{lehnert2020lsfm} in a manner similar to temporal-difference learning or value iteration.
Initially, the algorithm starts with a parsimonious representation in which all states are merged into a single latent state cluster and then the state representation is iteratively improved by minimizing a temporal difference error defined for SF vectors. 
This is similar to value iteration or temporal-difference learning, whereby values are assumed to be all zero initially but then adjusted iteratively, but here we apply this idea to refining a state representation (Figure~\ref{fig:column-world-refinement}).
Through this approach, we avoid having to optimize a model with a linearity constraint as well as using a least-squared error objective to train a neural network.
Instead, the clustering algorithm only trains a sequence of state classifiers to compute a reward-predictive representation.
Furthermore, the degree of compression---the correct number of reward-predictive latent states---is automatically discovered.
This is accomplished by starting with a parsimonious representation in which all states are merged into a single latent state cluster and iteratively improving the state representation until a reward-predictive representation is obtained without adding any additional latent states in the process.
In the following section, Section~\ref{sec:partition-refinement}, we will formally outline how this algorithm computes a reward-predictive state representation and discuss a convergence proof.
Subsequently, we demonstrate how the clustering algorithm can be combined with deep learning methods to compute a reward-predictive representation for visual control tasks (Section~\ref{sec:experiments}).
Here, we analyze how approximation errors contort the resulting state representation.
Lastly, we demonstrate how reward-predictive representation networks can be used to accelerate learning in tasks where an agent encounters both novel state observations and transition and reward functions.


\section{Iterative partition refinement}
\label{sec:partition-refinement}

The reward-predictive clustering algorithm receives a fixed trajectory data set
\begin{equation}
    \mathcal{D} = \{ (s_{i,0},a_{i,0},r_{i,0},s_{i,1},a_{i,1},...,s_{i,L_i}) \}_{i=1}^D \label{eq:transition-data-set}
\end{equation}
as input.
Each data point in $\mathcal{D}$ describes a trajectory of length $L_i$.
While we assume that this data set $\mathcal{D}$ is fixed, we do not make any assumptions about the action-selection strategy used to generate this data set.
The clustering algorithm then generates a cluster sequence $c_0,c_1,c_2,...$ that associates every observed state $s_{i,t}$ in $\mathcal{D}$ with a cluster index.
This cluster sequence is generated with an initial reward-refinement step and subsequent SF refinement steps until two consecutive clustering are equal. 
These steps are described next.

\subsection{Reward refinement}

To cluster states by their one-step reward values, a function $f_r$ is learned to predict one-step rewards.
This function is obtained through Empirical Risk Minimization (ERM)~\citep{vapnik1992empiricalriskminimization} by solving the optimization problem
\begin{equation}
    f_r = \arg\min_{f} \sum_{(s,a,r,s') \in \mathcal{D}} | f(s,a) - r |, \label{eq:reward-optimization}
\end{equation}
where the summation ranges over all transitions between states in the trajectory data set $\mathcal{D}$.
This optimization problem could be implemented by training a deep neural network using any variation of the backprop algorithm~\citep{goodfellow2016deeplearning}.
Because rewards are typically sparse in an RL task and because deep neural networks  often perform better as classifiers rather than regressors, we found it simpler to first bin the reward values observed in the transition data set $\mathcal{D}$ and train a classifier network that outputs a probability vector over the different reward bins.
Instead of using the absolute value loss objective stated in Equation~\eqref{eq:reward-optimization}, this network is trained using a cross-entropy loss function~\citep{goodfellow2016deeplearning}.
Algorithm~\ref{alg:cluster} outlines how this change is implemented.
The resulting function $f_r$ is then used to cluster all observed states by one-step rewards, leading to a cluster assignment such that, for two arbitrary state observations $s$ and $\tilde{s}$,
\begin{equation}
    c_1(s) = c_1(\tilde{s}) \implies \sum_{a \in \mathcal{A}} | f_r(s,a) - f_r(\tilde{s},a) | \le \varepsilon_r. \label{eq:reward-clustering}
\end{equation}
Figure~\ref{fig:function-approximation} illustrates why function approximation is needed to compute the one-step reward clustering in line~\eqref{eq:reward-clustering}.
In this example, states are described as positions in $\mathbb{R}^2$ and all points lying in the shaded area belong to the same partition and latent state.
Specifically, selecting action $a$ from within the grey square results in a transition to the right and a reward of zero, while selecting action $b$ results in a transition to the top and a reward of one. 
We assume that the transition data set only contains the two transitions indicated by the blue arrows. 
In this case, we have $r(p, a) = 0$ and $r(q, b) = 1$, because $(p, a)$ and $(q, a)$ are state-action combinations contained in the transition data set and a rewards of zero and one were given, respectively. 
To estimate one-step rewards for the missing state-action combinations $(p, b)$ and $(q, a)$,
we solve the function approximation problem in line~\eqref{eq:reward-optimization} and then use the learned function $f_r$ to predict one-step reward values for the missing state-action combinations $(p, b)$ and $(q, a)$.
For this reward-refinement step to accurately cluster states by one-step rewards, the optimization problem in line~\eqref{eq:reward-optimization} needs to be constrained, for example by picking an appropriate neural network architecture, such that the resulting function $f_r$ generalizes the same prediction across the shaded area in Figure~\ref{fig:function-approximation}.

\begin{figure}
    \centering
    \includegraphics[scale=1.]{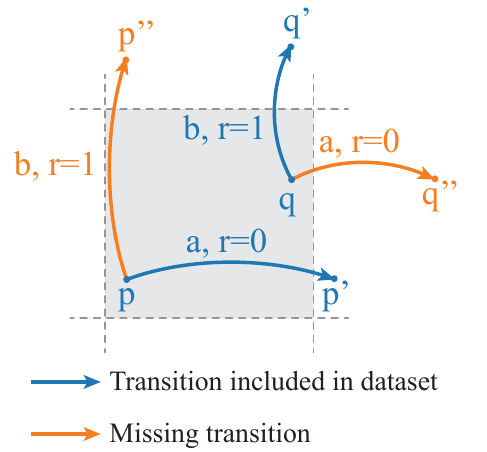}
    \caption{
        Function approximation is needed to generalize one-step reward predictions and SF predictions for state-action combinations not observed in the transition data set.
        In this example, the state space consists of points in $\mathbb{R}^2$ and the action space consists of actions $a$ and $b$. 
        We assume that a maximally compressed reward-predictive representation merges all points in the grey square into one latent state. 
        Selecting the action $a$ from within the grey square results in a transition to the right and generates a reward of $0$. 
        Selecting the action $b$ from within the grey square results in a transition to the top and generates a reward of $1$.
        If the dataset only contains the two transitions indicated by the blue arrows and the transitions indicated by the orange arrows are missing, then function approximation is used to predict one-step reward predictions and SF for the missing state and action combinations $(p, b)$ and $(q, a)$. 
        These function approximators need to be constrained such that they output the same one-step rewards and SF vectors for points that fall within the shaded square.
    }
    \label{fig:function-approximation}
\end{figure}

\subsection{Successor feature refinement}

After reward refinement, the state partitions are further refined by first computing the SFs, as defined in Equation~\eqref{eq:sf-definition}, for a state representation that maps individual state observations to a one-hot encoding of the existing partitions.
Specifically, for a clustering $c_i$ the state representation 
\begin{equation}
    \pmb{\phi}_i: s \mapsto \pmb{e}_{c_i(s)} \label{eq:one-hot-construction}
\end{equation}
is used, where $\pmb{e}_{c_i(s)}$ is a one-hot vector with entry $c_i(s)$ set to one.
The individual SF vectors $\pmb{\psi}^{\pi}_i(s,a)$ can be approximated by first computing a Linear Successor Feature Model (LSFM)~\citep{lehnert2020lsfm}.
The computation results in obtaining a square matrix $\pmb{F}$ and
\begin{equation}
    \pmb{\psi}^\pi_i(s,a) \approx \pmb{e}_{c_i(s)} + \gamma \pmb{F} \mathbb{E}_p \left[ \pmb{e}_{c_i(s')} \middle| s,a \right]. \label{eq:sf-one-step}
\end{equation}
Appendix~\ref{app:lsfm} outlines the details of this calculation.
Consequently, if a function $\pmb{f}_i$ predicts the expected next latent state $\mathbb{E}_p \left[ \pmb{e}_{c_i(s')} \middle| s,a \right]$, then Equation~\eqref{eq:sf-one-step} can be used to predict the SF vector $\pmb{\psi}^\pi_i(s,a)$.
Similar to the reward-refinement step, a vector-valued function $\pmb{f}_i$ is obtained by solving\footnote{Here, the L2 norm of a vector $\pmb{v}$ is denoted with $|| \pmb{v} ||$.} 
\begin{equation}
    \pmb{f}_i = \arg \min_{\pmb{f}} \sum_{(s,a,r,s') \in \mathcal{D}} || \pmb{f}(s,a) - \pmb{e}_{c_i(s')} ||. \label{eq:sf-optimization}
\end{equation}
Similar to learning the approximate reward function, we found that it is more practical to train a classifier and to replace the mean squared error loss objective stated in line~\eqref{eq:sf-optimization} with a cross entropy loss objective and train the network $\pmb{f}_i$ to predict a probability vector over next latent states.
This change is outlined in Algorithm~\ref{alg:cluster}.
The next clustering $c_{i+1}$ is then constructed such that for two arbitrary states $s$ and $\tilde{s}$, 
\begin{equation}
    c_{i+1}(s) = c_{i+1}(\tilde{s}) \implies \sum_{a \in \mathcal{A}} || \hat{\pmb{\psi}}^\pi_i(s,a) - \hat{\pmb{\psi}}^\pi_i(\tilde{s},a) || \le \varepsilon_\psi.
\end{equation}
This SF refinement procedure is repeated until two consecutive clusterings $c_i$ and $c_{i+1}$ are identical.

Algorithm~\ref{alg:cluster} summarizes the outlined method.
In the remainder of this section, we will discuss under which assumptions this method computes a reward-predictive representation with as few latent states as possible.

\begin{algorithm}
\caption{Iterative reward-predictive representation learning}
\label{alg:cluster}
\begin{algorithmic}[1]
\State \textbf{Input:} A trajectory data set $\mathcal{D}$, $\varepsilon_r, \varepsilon_\psi > 0$.

\State Bin reward values and construct a reward vector $\pmb{w}_r(i) = r_i$. 
\State Construct the function $i(r)$ that indexes distinct reward values and $\pmb{w}_r(i(r)) = r$.
\State Solve $\pmb{f}_r = \arg \min_f \sum_{(s,a,r,s') \in \mathcal{D}} H(\pmb{f}(s,a), \pmb{e}_{i(r)})$ via gradient descent \label{alg-line:erm-reward}
\State Compute reward predictions $f_r(s,a) = \pmb{w}_r^\top \pmb{f}_r(s,a)$
\State Construct $c_1$ such that $c_1(s) = c_1(\tilde{s}) \implies \sum_{a \in \mathcal{A}} | f_r(s,a) - f_r(\tilde{s},a) | \le \varepsilon_r$ \label{alg-line:cluster-reward}

\For{$i=2,3,...,N$ until $c_{i+1} = c_i$} 
    \State Compute $\pmb{F}_a$ for every action.
    \State Construct $\pmb{\phi}_i: s \mapsto \pmb{e}_{c_i(s)}$
    \State Solve $\pmb{f}_i = \arg \min_{\pmb{f}} \sum_{(s,a,r,s') \in \mathcal{D}} H(\pmb{f}(s,a), \pmb{e}_{c_i(s')})$ via gradient descent \label{alg-line:erm-sf}
    \State Compute $\widehat{\pmb{\psi}}^\pi_i(s,a) = \pmb{e}_{c_i(s)} + \gamma \pmb{F} \pmb{f}_i(s,a)$
    \State Construct $c_{i+1}$ such that $c_{i+1}(s) = \hat{c}_{i+1}(\tilde{s}) \implies \sum_{a \in \mathcal{A}} || \hat{\pmb{\psi}}^\pi_i(s,a) - \hat{\pmb{\psi}}^\pi_i(\tilde{s},a) || \le \varepsilon_\psi$ \label{alg-line:cluster-sf}
\EndFor

\State \Return $\pmb{\phi}_N$
\end{algorithmic}
\end{algorithm}

\subsection{Convergence to maximally compressed reward-predictive representations}


The idea behind Algorithm~\ref{alg:cluster} is similar to the block-splitting method introduced by~\citet{givan2003bisimulation}.
While~\citeauthor{givan2003bisimulation} focus on the tabular setting and refine partitions using transition and reward tables, our clustering algorithm implements a similar refinement method but for data sets sampled from MDPs with perhaps (uncountably) infinite state spaces.
Instead of assuming access to the complete transition function, Algorithm~\ref{alg:cluster} learns SFs and uses them to iteratively refine state partitions.
For this refinement operation to converge to a correct and maximally-compressed-reward-predictive representation, the algorithm needs to consider all possible transitions between individual state partitions.
This operation is implicitly implemented by clustering SFs, which predict the frequency of future state partitions and therefore implicitly encode the partition-to-partition transition table.\footnote{The state-to-state transition table is never computed by our algorithm.}

Convergence to a correct maximally-compressed-reward-predictive representation relies on two properties that hold at every iteration (please refer to Appendix~\ref{app:proofs} for a formal statement of these properties):
\begin{enumerate}
    \item State partitions are refined and states of different partitions are never merged into the same partition.
    \item Two states that lead to equal expected reward sequences are never split into separate partitions.
\end{enumerate}
The first property ensures that Algorithm~\ref{alg:cluster} is a partition refinement algorithm, as illustrated by the tree schematic in Figure~\ref{fig:column-world-refinement} (and does not merge state partitions).
If such an algorithm is run on a finite trajectory data set with a finite number of state observations, the algorithm is guaranteed to terminate and converge to some state representation because one can always assign every observation into a singleton cluster.
However, the second property ensures that the resulting representation is reward-predictive while using as few state partitions as possible:
If two states $s$ and $\tilde{s}$ lead to equal expected reward sequences and $\mathbb{E}_p \left[ (r_1,...,r_n) \middle| s, a_1,...,a_n \right] = \mathbb{E}_p \left[ (r_1,...,r_n) \middle| \tilde{s}, a_1,...,a_n \right]$ (for any arbitrary action sequence $a_1,...,a_n$), then they will not be split into separate partitions.
If Algorithm~\ref{alg:cluster} does not terminate early (which we prove in Appendix~\ref{app:proofs}), the resulting representation is reward-predictive and uses as few state partitions as possible.

The reward-refinement step satisfies both properties:
The first property holds trivially, because $c_1$ is the first partition assignment.
The second property holds because two states with different one-step rewards cannot be merged into the same partition for any reward-predictive representation.

To see that both properties are preserved in every subsequent iteration, we consider the partition function $c^*$ of a correct maximally compressed reward-predictive representation.
Suppose $c_i$ is a sub-partitioning of $c^*$ and states that are assigned different partitions by $c_i$ are also assigned different partitions in $c^*$.
(For example, in Figure~\ref{fig:column-world-refinement} $c_0$, $c_1$, and $c_3$ are all valid sup-partitions of $c_4$.)
Because of this sub-partition property, we can define a projection matrix $\pmb{\Phi}_i$ that associates partitions defined by $c^*$ with partitions defined by $c_i$.
Specifically, the entry $\pmb{\Phi}_i(k,j)$ is set to one if for the same state $s$, $c^*(s)=j$ and $c_i(s)=k$.
In Appendix~\ref{app:proofs} we show that this projection matrix can be used to relate latent states induced by $c^*$ to latent states induced by $c_i$ and
\begin{equation}
    \pmb{\Phi}_i \pmb{e}_{c^*(s)} = \pmb{e}_{c_i(s)}. \label{eq:latent-projection}
\end{equation}
Using the identity in line~\eqref{eq:latent-projection}, the SFs at an intermediate refinement iteration can be expressed in terms of the SFs of the optimal reward-predictive representation and
\begin{align}
    \pmb{\psi}^\pi_i(s,a) &= \mathbb{E}_{a,\pi} \left[ \sum_{t=1}^\infty \gamma^{t-1} \pmb{e}_{c_i(s_t)} \middle| s_1=s,a_1=a \right] \label{eq:sf-c-i} \\
    &= \mathbb{E}_{a,\pi} \left[ \sum_{t=1}^\infty \gamma^{t-1} \pmb{\Phi}_i \pmb{e}_{c^*(s)} \middle| s_1=s,a_1=a \right] &(\text{by substitution with~\eqref{eq:latent-projection}}) \\
    &= \pmb{\Phi}_i \mathbb{E}_{a,\pi} \left[ \sum_{t=1}^\infty \gamma^{t-1} \pmb{e}_{c^*(s)} \middle| s_1=s,a_1=a \right] &(\text{by linearity of expectation}) \\
    &= \pmb{\Phi}_i \pmb{\psi}^\pi_*(s,a). \label{eq:sf-projection}
\end{align}
As illustrated in Figure~\ref{fig:column-world-sf},~\citet{lehnert2020lsfm} showed that two states $s$ and $\tilde{s}$ that are assigned the same partition by a maximally compressed reward-predictive clustering $c^*$ also have equal SF vectors and therefore 
\begin{equation}
    \pmb{\psi}^\pi_i(s,a) - \pmb{\psi}^\pi_i(\tilde{s},a) = \pmb{\Phi}_i \pmb{\psi}^\pi_*(s,a) - \pmb{\Phi}_i \pmb{\psi}^\pi_*(\tilde{s},a) = \pmb{\Phi}_i \underbrace{(\pmb{\psi}^\pi_*(s,a) - \pmb{\psi}^\pi_*(\tilde{s},a))}_{=\pmb{0}} = \pmb{0}. \label{eq:equal-sf}
\end{equation}
By line~\eqref{eq:equal-sf}, these two states $s$ and $\tilde{s}$ also have equal SFs at any of the refinement iterations in Algorithm~\ref{alg:cluster}.
Consequently, these two states will not be split into two different partitions (up to some approximation error) and the second property holds.

Similarly, if two states are assigned different partitions, then the first term in the discounted summation in line~\eqref{eq:sf-c-i} contains two different one-hot bit vectors leading to different SFs for small enough discount factor and $\varepsilon_\psi$ settings.
In fact, in Appendix~\ref{app:proofs} we prove that this is the case for all possible transition functions if
\begin{equation}
    \gamma < \frac{1}{2}~\text{and}~\frac{2}{3} \left( 1 - \frac{\gamma}{1 - \gamma} \right) > \varepsilon_\psi > 0. \label{eq:matching-condition}
\end{equation}
While this property of SFs ensures that Algorithm~\ref{alg:cluster} always refines a given partitioning for any arbitrary transition function, we found that significantly higher discount factor settings can be used in our simulations.

\begin{figure}
    \centering
    \includegraphics{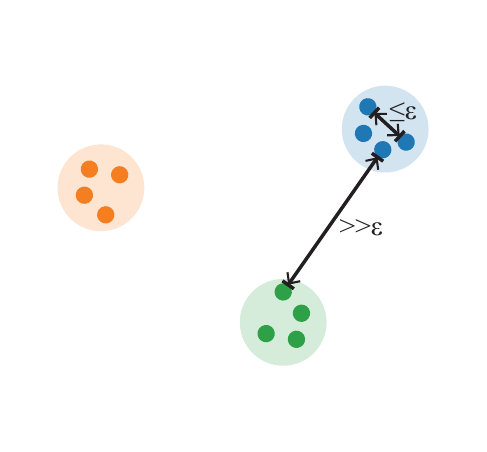}
    \caption{
        The cluster thresholds $\varepsilon_\psi$ and $\varepsilon_r$ must be picked to account for prediction errors while ensuring that states are not merged into incorrect clusters.
        For example, suppose the clustered SF vectors are the three black dots in $\mathbb{R}^2$ and the function $\pmb{f}_i$ predicts values close to these dots, as indicated by the colored dots.
        For the clustering to be correct (and computable in polynomial time), the prediction errors---the distance between the predictions and the correct value---has to be $\varepsilon_\psi / 2$.
        At the same time, $\varepsilon_\psi$ has to be small enough to avoid overlaps between the different coloured clusters.
    }
    \label{fig:epsilon-cluster}
\end{figure}

Because function approximation is used to predict the quantities used for clustering, prediction errors can corrupt this refinement process.
If prediction errors are too high, the clustering steps in Algorithm~\ref{alg:cluster} may make incorrect assignments between state observations and partitions.
To prevent this, the prediction errors of the learned function $f_r$ and $\pmb{\psi}^\pi_i$ must be bounded by the thresholds used for clustering, leading to the following assumption.

\begin{asmpt}[$\varepsilon$-perfect]\label{asmpt:perfect}
For $\varepsilon_\psi, \varepsilon_r > 0$, the ERM steps in Algorithm~\ref{alg:cluster} lead to function approximators that are near optimal such that for every observed state-action pair $(s,a)$,
\begin{equation}
    \Big| f_r(s,a) - \mathbb{E}[r(s,a,s') | s,a] \Big| \le \frac{\varepsilon_r}{2} ~\text{and}~ \Big|\Big| \widehat{\pmb{\psi}}^\pi_i(s,a) - \pmb{\psi}^\pi_i(s,a) \Big|\Big| \le \frac{\varepsilon_\psi}{2}.
\end{equation}
\end{asmpt}

Figure~\ref{fig:epsilon-cluster} illustrates why this assumption is necessary and why predictions have to fall to the correct value in relation to $\varepsilon_\psi$ and $\varepsilon_r$.
In Section~\ref{sec:experiments} we will discuss that this assumption is not particularly restrictive in practice and when not adhering to this assumption can still lead to a maximally-compressed-reward-predictive representation.
Under Assumption~\ref{asmpt:perfect}, Algorithm~\ref{alg:cluster} converges to a maximally compressed reward-predictive representation.

\begin{thm}[Convergence]\label{thm:convergence}
If Assumption~\ref{asmpt:perfect} and the matching condition in line~\eqref{eq:matching-condition} hold, then Algorithm~\ref{alg:cluster} returns an approximate maximally-compressed-reward-predictive representation for a trajectory data set sampled from any MDP.
\end{thm}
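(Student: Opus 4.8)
The plan is to establish the two invariants highlighted just before the theorem and then combine them with a finiteness argument and a fixed-point characterization of the limiting partition. I write $c \preceq c'$ to mean that $c$ is coarser than $c'$ (every block of $c'$ lies inside a block of $c$), and $c^*$ for the partition of a maximally-compressed-reward-predictive representation. Throughout I would lean on the bijection of \citet{lehnert2020lsfm} between equality of reward-predictive latent states and equality of SF vectors, together with the projection identity \eqref{eq:sf-projection}.

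First I would prove by induction on $i$ that $c_i \preceq c^*$, i.e. $c^*(s)=c^*(\tilde{s}) \implies c_i(s)=c_i(\tilde{s})$; this is the ``no over-splitting'' Property~2. The base case is the reward-refinement step: two states sharing a $c^*$-label have identical expected reward sequences, hence identical one-step rewards, so under Assumption~\ref{asmpt:perfect} their estimates $f_r(\cdot,a)$ agree within $\varepsilon_r$ and \eqref{eq:reward-clustering} keeps them together. For the inductive step I would assume $c_i \preceq c^*$, build the projection matrix $\pmb{\Phi}_i$ of \eqref{eq:latent-projection}, and invoke \eqref{eq:equal-sf} to conclude that two states with a common $c^*$-label have identical true SFs $\pmb{\psi}^\pi_i$; Assumption~\ref{asmpt:perfect} then places the estimates $\widehat{\pmb{\psi}}^\pi_i$ within $\varepsilon_\psi$, so the clustering step leaves them together and $c_{i+1}\preceq c^*$.

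Second I would prove the ``refinement'' Property~1, that $c_{i+1}$ refines $c_i$ and never merges blocks. Here the leading term $\pmb{e}_{c_i(s)}$ in \eqref{eq:sf-one-step} does the work: two states in different $c_i$-blocks contribute distinct one-hot vectors, and the matching condition \eqref{eq:matching-condition} is exactly what guarantees that the induced gap in $\pmb{\psi}^\pi_i$ exceeds $\varepsilon_\psi$ for every transition function, even after absorbing the $\varepsilon_\psi/2$ estimation slack of Assumption~\ref{asmpt:perfect}. Consequently such states never land in the same $c_{i+1}$-block. Combining the two invariants, $\{c_i\}$ is a monotone refinement chain trapped between $c_1$ and $c^*$, so $c_1 \preceq c_i \preceq c^*$; since $\mathcal{D}$ contains finitely many observed states, the number of blocks is a bounded nondecreasing integer, hence it stabilizes and the loop halts with $c_{N+1}=c_N$ after finitely many iterations.

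Finally I would identify the limit. At termination $c_N$ is a fixed point of the SF-clustering map, so within every $c_N$-block the one-step rewards agree (because $c_N$ refines $c_1$) and the approximate SFs agree within $\varepsilon_\psi$; inverting the affine map $\pmb{x}\mapsto \pmb{e}_{c_N(s)}+\gamma\pmb{F}\pmb{x}$ of \eqref{eq:sf-one-step} forces the expected next-block distributions to coincide, so $c_N$ is an approximate bisimulation and therefore reward-predictive. As $c^*$ is the coarsest reward-predictive partition, $c^* \preceq c_N$, while Property~2 gives $c_N \preceq c^*$; antisymmetry of $\preceq$ then yields $c_N=c^*$, so $\pmb{\phi}_N$ is reward-predictive and maximally compressed up to the tolerance set by $\varepsilon_r,\varepsilon_\psi$. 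I expect this last ``no early termination'' step to be the main obstacle: it is where the function-approximation error must be reconciled with the exact algebra, and one must verify that equality of approximate SFs within a block genuinely certifies an approximate bisimulation rather than being an artifact of $\pmb{F}$ or of the tolerance $\varepsilon_\psi$ collapsing distinct transition behaviors---which is precisely the role of the quantitative matching condition \eqref{eq:matching-condition}.
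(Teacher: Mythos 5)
Your proposal follows essentially the same route as the paper's proof: the same two invariants (refinement, and never splitting states that $c^*$ merges), established by induction using the projection identity $\pmb{\psi}^\pi_i = \pmb{\Phi}_i\pmb{\psi}^\pi_*$ for the no-over-splitting direction and the quantitative SF-separation bound from the matching condition (the paper's Lemma on SF Separation, giving a gap of $3\varepsilon_\psi$ that survives the $2\varepsilon_\psi$ of estimation slack) for the refinement direction, followed by the finiteness/monotonicity termination argument. The one place you diverge is the closing step. You argue that the fixed point $c_N$ is an approximate bisimulation by inverting the affine map $\pmb{x}\mapsto\pmb{e}_{c_N(s)}+\gamma\pmb{F}\pmb{x}$, and then conclude $c_N=c^*$ exactly by antisymmetry of the coarseness order. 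The paper avoids both moves: it verifies the two conditions of its Definition of a maximally-compressed-reward-predictive clustering directly, by constructing the matrices $\widehat{\pmb{F}}_a$ as within-block averages of the predicted SFs and bounding $\big\|\pmb{e}_{c_T(s)}^\top\widehat{\pmb{F}}_a-\pmb{\psi}^\pi_T(s,a)\big\|\le\tfrac{3}{2}\varepsilon_\psi$, and it obtains maximal compression from the splitting property alone ($c_T$ has no more blocks than $c^*$) without claiming $c_T=c^*$. Your version is slightly overclaimed at this point: with $\varepsilon$-tolerances in play, the inversion of the affine map can amplify errors by $\|\pmb{F}^{-1}\|$, and ``approximately reward-predictive implies refined by the coarsest exactly reward-predictive partition'' is not automatic, so exact equality $c_N=c^*$ is stronger than what the theorem asserts (an \emph{approximate} representation). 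The paper's direct verification sidesteps this; if you adopt it, the rest of your argument goes through as written.
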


A formal proof of Theorem~\ref{thm:convergence} is presented in Appendix~\ref{app:proofs}.

In practice, one cannot know if prediction errors are small enough, a principle that is described by~\citet{vapnik1992empiricalriskminimization}.
However, recent advances in deep learning~\citep{belkin2019doubledescent} have found that increasing the capacity of neural networks often makes it possible to interpolate the training data and still perform almost perfectly on independently sampled test data.
In the following section we present experiments that illustrate how this algorithm can be used to find a maximally compressed reward-predictive representation.

\section{Learning reward-predictive representation networks}
\label{sec:experiments}

In this section, we first illustrate how the clustering algorithm computes a reward-predictive representation on the didactic Column World example.
Then, we focus on a more complex visual control task---the Combination Lock task,  where inputs are a set of MNIST images from pixels---and discuss how function approximation errors lead to spurious latent states and how they can be filtered out.
Lastly, we present a set of experiments highlighting how initializing a DQN agent with a reward-predictive representation network improves learning efficiency, demonstrating in which cases reward-predictive representations are suitable for out-of-distribution generalization.

\begin{figure}
    \centering
    \subfigure[
        Point Observation Column World task
    ]{
        \label{fig:column-world-position-map}
        \hspace{0.5cm}
        \includegraphics[scale=1.]{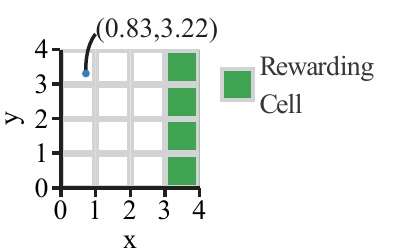}
        \hspace{0.5cm}
    }
    ~
    \subfigure[
        Reward-predictive clustering
    ]{
        \label{fig:column-world-clustering}
        \hspace{0.8cm}
        \includegraphics[scale=1.]{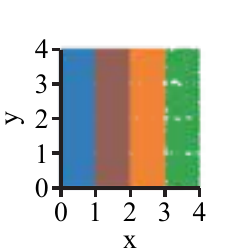}
        \hspace{0.8cm}
    }
    ~
    \subfigure[
        Reward sequence prediction errors
    ]{
        \label{fig:column-world-reward-sequence-errors}
        \hspace{0.7cm}
        \includegraphics[scale=1.]{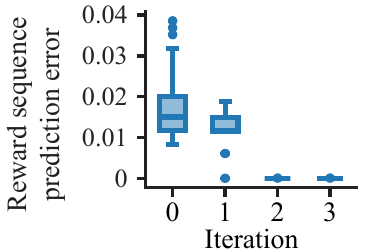}
        \hspace{0.7cm}
    }
    \caption{
        Reward-predictive clustering of the Point Observation Column World task.
        \subref{fig:column-world-position-map}:
        The Point Observation Column World task is a variant of the Column World task where instead of providing the agent with a grid cell index it only observes a real valued point $(x,y) \in (0,4)^2$.
        When the agent is in a grid cell, for example cell the top left cell, a point is sampled uniformly at random from the corresponding cell, for example the point $(0.83, 3.22)$.
        \subref{fig:column-world-clustering}:
        The computed cluster function $c_3$ assigns each state observation (a point in the shown scatter plot) with a different latent state index (a different color).
        \subref{fig:column-world-reward-sequence-errors}:
        The box plot shows the reward sequence prediction error for each trajectory at each iteration (iteration 0 shows the initial cluster function).
        At each iteration a different representation network was trained and then evaluated on a separately sampled 100-trajectory test data set.
        The full details of this experiment are listed in Appendix~\ref{app:experiments}.
    }
    \label{fig:column-world-experiment}
\end{figure}

Figure~\ref{fig:column-world-experiment} illustrates a reward-predictive clustering for a variant of the Column World task where state observations are real-valued points.
This variant is a block MDP~\citep{du2019bmdp}:
Instead of observing a grid cell index, the agent observes a real-valued point $(x,y)$ (Figure~\ref{fig:column-world-position-map}) but still transitions through a $4 \times 4$ grid.
This point is sampled uniformly at random from a square that corresponds to the grid cell the agent is in, as illustrated in Figure~\ref{fig:column-world-position-map}.
Therefore, the agent does not (theoretically) observe the same $(x,y)$ point twice and transitions between different states become probabilistic.
For this task, a two-layer perceptron was used to train a reward and next latent state classifier (Algorithm~\ref{alg:cluster}, lines~\ref{alg-line:erm-reward} and~\ref{alg-line:erm-sf}).
Figure~\ref{fig:column-world-clustering} illustrates the resulting clustering as colouring of a scatter plot.
Each dot in the scatter plot corresponds to a state observation point $(x,y)$ in the training data set and the colouring denotes the final latent state assignment $c_3$.
Figure~\ref{fig:column-world-reward-sequence-errors} presents a box-plot of the reward-sequence prediction errors as a function of each refinement iteration.
One can observe that after performing the second refinement step and computing the cluster function $c_2$, all reward-sequence prediction errors drop to zero.
This is because the clustering algorithm initializes the cluster function $c_0$ by first merging all terminal states into a separate partition (and our implementation of the clustering algorithm is initialized at the second step in Figure~\ref{fig:column-world-refinement}).
Because the cluster functions $c_2$ and $c_3$ are identical in this example, the algorithm is terminated after the third iteration.

\subsection{Clustering with function approximation errors}

\begin{figure}
    \centering
    \begin{minipage}[b]{0.4\linewidth}
    \begin{center}
        \subfigure[Combination Lock Task, right dial invariant]{
            \hspace{0.2cm}
            \label{fig:combination-lock-train}
            \includegraphics[scale=1.]{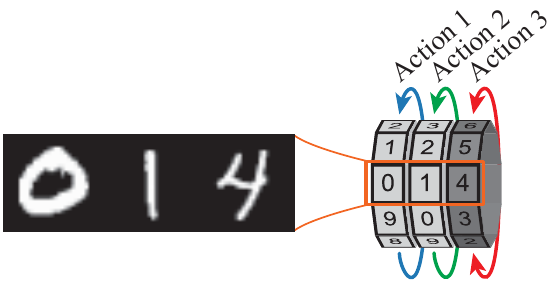}
            \hspace{0.2cm}
        }
        \subfigure[Reward sequence prediction error distribution]{
            \hspace{0.2cm}
            \label{fig:combination-lock-reward-errors}
            \includegraphics[scale=1.]{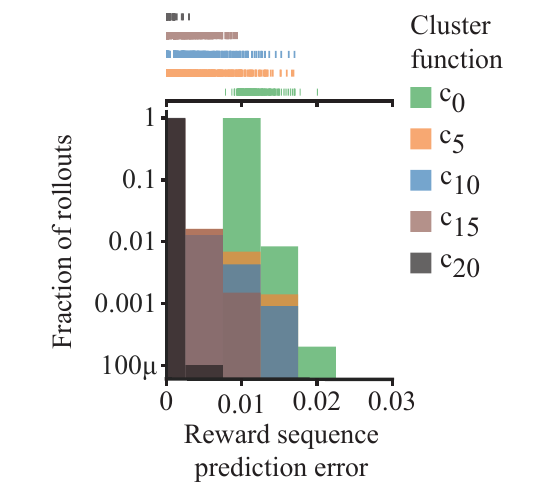}
            \hspace{0.2cm}
        }
    \end{center}
    \end{minipage}
    \begin{minipage}[b]{0.58\linewidth}
        \begin{center}
            \subfigure[
                Latent state confusion matrix
            ]{
                \label{fig:combination-lock-clustering}
                \includegraphics[scale=1.]{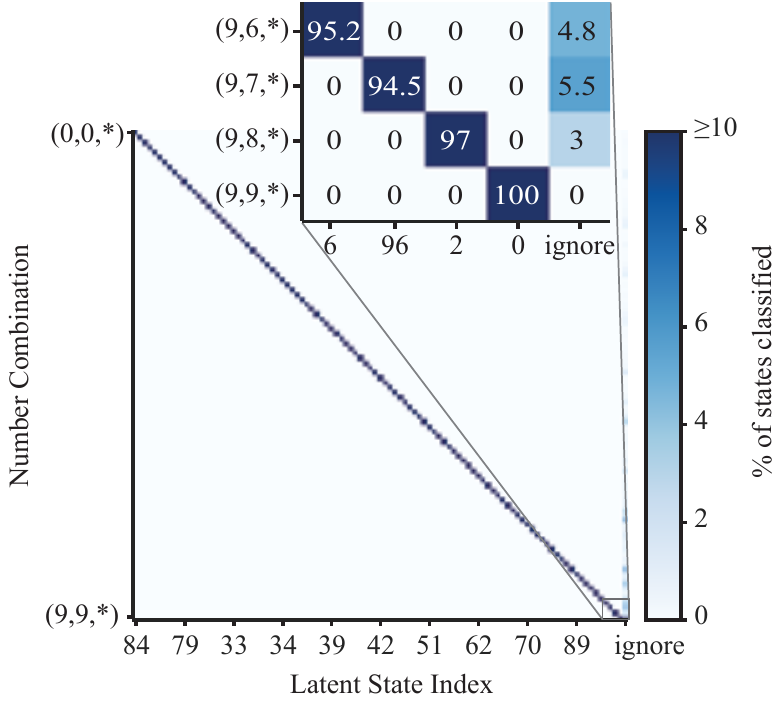}
            }
        \end{center}
    \end{minipage}
    \caption{
        Reward-predictive clustering of the Combination Lock task.
        \subref{fig:combination-lock-train}:
        In the Combination Lock task, the agent decides which dial(s) to rotate to move toward a rewarding combination.
        The agent has to learn that only the first two dials are relevant for unlocking the combination: a reward is given once the left and center dials both arrive at the digit nine and the lock matches the pattern $(9,9,*)$.
        The right (shaded) dial is ``broken'' and spins at random when the third action is selected, and thus all digits on it should be equally reward-predictive.
        Each state consists of an image that is assembled using the MNIST data set.
        The fixed trajectory data set provided to the clustering algorithm uses images from the MNIST training dataset.
        The resulting model was evaluated using an independently sampled test trajectory data set using images from the MNIST test data set.
        \subref{fig:combination-lock-reward-errors}:
        The histogram plots the distribution reward sequence errors for 1000 test trajectories for five different refinement stages of the clustering algorithm on a log-scale.
        The distribution of the 1000 samples is plotted as a rug plot above the histogram.
        For each trajectory the absolute difference between predicted and true reward value was computed and averaged along the trajectory.
        The predictions where made by training a separate representation network for each cluster function.
        \subref{fig:combination-lock-clustering}: 
        Matrix plot illustrating how different number combinations are associated with different latent states.
        Each row plots the distribution across latent states of images matching a specific number pattern.
        Each column of the matrix plot corresponds to a specific latent state index and which combination is associated with which index is determined arbitrarily by the clustering algorithm.
        Terminal states that are observed at the end of each trajectory are merged into latent state zero by default.
        The ignore column indicates the fraction of state images that were identified as belonging to a spurious latent state and are excluded from the final clustering.
    }
    \label{fig:combination-lock-experiment}
\end{figure}

As illustrated in Figure~\ref{fig:epsilon-cluster}, for the cluster algorithm to converge to a maximally compressed representation, the predictions made by the neural networks must be within some $\varepsilon$ of the true prediction target.
Depending on the task and training data set, this objective may be difficult to satisfy.
\citet{belkin2019doubledescent} presented the double-descent curve, which suggests that it is possible to accurately approximate any function with large enough neural network architectures.
In this section we test the assumption that all predictions must be $\varepsilon$ accurate by running the clustering algorithm on a data set sampled from the Combination Lock task (Figure~\ref{fig:combination-lock-experiment}).
In this task, the agent decides which dial to rotate on each step to unlock a numbered combination lock (schematic in Figure~\ref{fig:combination-lock-train}).
Here, state observations are assembled using training images from the MNIST data set~\citep{lecun1998mnist} and display three digits visualizing the current number combination of the lock.
To compute a reward-predictive representation for this task, we adapt our clustering algorithm to process images using the ResNet18 architecture~\citep{pytorch,he2016resnet} for approximating one-step rewards and next latent states.
For all experiments we initialize all network weights randomly and do not provide any pre-trained weights.
The full details of this experiment are documented in Appendix~\ref{app:experiments}.

In this task, a reward-predictive representation network has to not only generalize across variations in individual digits, but also learn to ignore the rightmost digit.
The matrix plot in Figure~\ref{fig:combination-lock-clustering} illustrates how the reward-predictive representation network learned by the clustering algorithm generalizes across the different state observations.
Intuitively, this plot is similar to a confusion matrix: 
Each row plots the distribution over latent states for all images that match a specific combination pattern.
For example, the first row plots the latent state distribution for all images that match the pattern $(0,0,*)$ (left and middle dial are set to zero, the right dial can be any digit), the second row plots the distribution for the pattern $(0,1,*)$, and so on.
In total the clustering algorithm correctly inferred 100 reward-predictive latent states and correctly ignores the rightmost digit, abstracting it away from the state input.
Prediction errors can contort the clustering in two ways:
\begin{enumerate}
    \item If prediction errors are high, then a state observation can be associated with the wrong latent state. 
    For example, an image with combination $(0,1,4)$ could be associated with the latent state corresponding to the pattern $(0,7,*)$.
    \item If prediction errors are low but still larger than the threshold $\varepsilon_\psi$ or $\varepsilon_r$, then some predictions can be assigned into their own cluster and a spurious latent state is created.
    These spurious states appear as latent states that are associated with a small number of state observations.
\end{enumerate}
Figure~\ref{fig:combination-lock-clustering} indicates that the first prediction error type does not occur because all off-diagonal elements are exactly zero.
This is because a large enough network architecture is trained to a high enough accuracy.
However, the second prediction error type does occur.
In this case, latent states that are associated with very few state observations are masked out of the data set used for training the neural network (line~\ref{alg-line:erm-sf} in Algorithm~\ref{alg:cluster}).
These states are plotted in the ignore column (right-most column) in Figure~\ref{fig:combination-lock-clustering}.
In total, less than 0.5\% of the data set are withheld and the clustering algorithm has inferred 100 latent states.
Consequently, the learned reward-predictive representation uses as few latent states as possible and is maximally compressed.

Figure~\ref{fig:combination-lock-reward-errors} plots the reward-sequence error distribution for a representation network at different refinement stages.
Here, 1000 independently sampled test trajectories were generated using images from the MNIST test set.
One can see that initially reward sequence prediction errors are high and then converge towards zero as the refinement algorithm progresses.
Finally, almost all reward sequences are predicted accurately but not perfectly, because a distinct test image set is used and the representation network occasionally predicts an incorrect latent state.
This is a failure in the vision model---if the convolutional neural network would perfectly classify images into the latent states extracted by the clustering algorithm, then the reward sequence prediction errors would be exactly zero (similar to the Column World example in Figure~\ref{fig:column-world-reward-sequence-errors}).
Furthermore, if the first transition of a 1000-step roll-out is incorrectly predicted, then all subsequent predictions are incorrect as well.
Consequently, the reward sequence prediction error measure is sensitive to any prediction errors that may happen when predicting rewards for a long action sequence.
However, the trend of minimizing reward sequence prediction errors with every refinement iteration is still plainly visible in Figure~\ref{fig:combination-lock-reward-errors}.

\subsection{Improving learning efficiency}
\label{sec:improving-learning-efficiency}

\begin{figure}
    \centering
    \subfigure[
        Transfer task variants
    ]{
        \label{fig:combination-lock-transfer-tasks}
        \includegraphics[scale=1.]{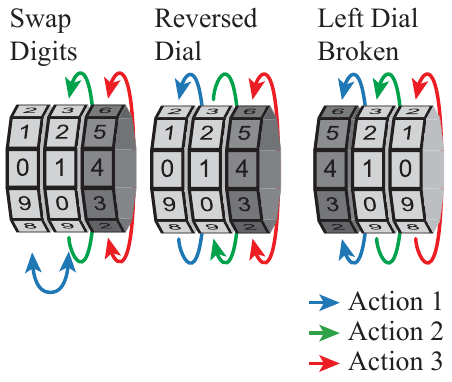}
    }
    ~
    \subfigure[
        Q-network of reward-predictive agent
    ]{
        \label{fig:combination-lock-dqn}
        \hspace{1cm}
        \includegraphics[scale=1.]{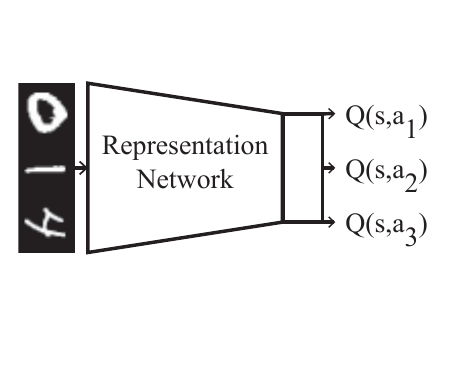}
        \hspace{1cm}
    }
    ~
    \subfigure[
        Performance comparison of DQN variants
    ]{
        \label{fig:combination-lock-performance}
        \includegraphics[scale=1.]{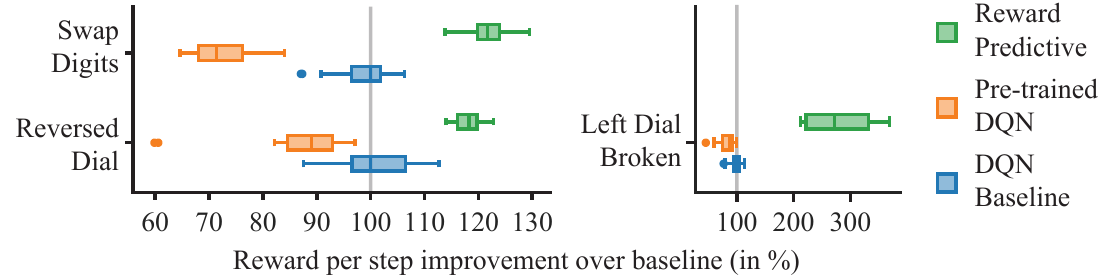}
    }
    \caption{
        Representation transfer in the Combination Lock task.
        \subref{fig:combination-lock-transfer-tasks}:
        In the swap digits variant, the transition function is changed such that the first action only swaps the digit between the left and middle dial.
        Only the middle dial rotates as before and the right dial also does not have any effect on the obtained rewards.
        Furthermore, the rewarding combination is changed to $(5,6,*)$.
        The reversed dial variant differs from the training task in that the rotation direction of the middle dial is reversed and the rewarding combination is changed to $(7,4,*)$.
        The left dial broken variant is similar to the training task but the left dial is broken and spins at random instead of the right dial.
        Here, the transitions and reward association between different latent states are the same as in the training task with the difference being how different images are associated with different latent states and different action labels having different effects.
        The rewarding combination is $(*,9,9)$.
        To ensure that the state images of the test tasks are distinct from the training task, all test tasks construct the state images using the MNIST test image set.
        \subref{fig:combination-lock-dqn}:
        The reward-predictive agent replaces all except the top-most layer with the reward-predictive representation network computed by the clustering algorithm for the training task.
        During training in the test task only the top-most layer receives gradient updates and the representation network's weights are not changed.
        \subref{fig:combination-lock-performance}:
        Each agent was trained for 20 different seeds in each task.
        For each repeat, the pre-trained DQN agent was first trained on the training task and then on the test task.
        Appendix~\ref{app:experiments} lists all details and additional plots of the experiment.
    }
    \label{fig:combination-lock-transfer}
\end{figure}

Ultimately, the goal of using reward-predictive representations is to speed up learning by re-using abstract task knowledge encoded by a pre-trained representation network.
In contrast, established meta-learning algorithms such as MAML~\citep{finn2017maml} or the SF-based Generalized Policy Improvement (GPI) algorithm~\citep{barreto2018deepsf,barreto2020fastsf} rely on extracting either one or multiple network initializations to accelerate learning in a test task.
To empirically test the differences between re-using a pre-trained reward-predictive representation network and using a previously learned network initialization, we now consider three variants of the Combination Lock task (Figure~\ref{fig:combination-lock-transfer-tasks}).
All variants vary from the training task in their specific transitions, rewards, and optimal policy.
Furthermore, the state images are generated using MNIST test images to test if a pre-trained agent can generalize what it has seen during pre-training to previously unseen variations of digits.\footnote{This experiment design is similar to using separately sampled training and test data in supervised machine learning.}
The three task variants require an agent to process the state images differently in order to maximize rewards:
In the swap digits and reversed dial variants (center and left schematic in Figure~\ref{fig:combination-lock-transfer-tasks}), an agent has to correctly recognize the left and center digit in order to select actions optimally.
While the effect of different actions and the rewarding combinations differ from the training task, an agent initially processes state images in the same way as in the training task.
Specifically, because the right dial is still broken and rotates at random, an agent needs to correctly identify the left and center digits and then use that information to make a decision. 
These two transfer tasks test an agent's ability to adapt to different transitions and rewards while preserving which aspects of the state image---namely the left and center digits---are relevant for decision-making.
The left dial broken variant (right schematic in Figure~\ref{fig:combination-lock-transfer}) differs in this particular aspect.
Here, the center and right digits are relevant for reward-sequence prediction and decision-making because the left dial is broken and rotates at random.
With this task, we test to what extent a pre-trained reward-predictive representation network can be used when state equivalences modelled by the representation network differ between training and test tasks.

To test for positive transfer in a controlled experiment, we train three variants of the DQN algorithm~\citep{mnih2015dqn} and record the average reward per time step spent in each task.
Each DQN variant uses a different Q-network initialisation but all agents use the same network architecture, number of network weights, and hyper-parameters.
Hyper-parameters were independently fine tuned on the training task in Figure~\ref{fig:combination-lock-train} so as to not bias the hyper-parameter selection towards the used test tasks (and implicitly using information about the test tasks during training).
In Figure~\ref{fig:combination-lock-performance}, the DQN baseline (shown in blue) initializes networks at random (using Glorot initialization~\citep{glorot2010initialization}) similar to the original DQN agent.
This agent's performance is used as a reference value in each task.
The pre-trained DQN agent (shown in orange) first learns to solve the training task, and the learned Q-network weights are then used to initialize the network weights in each test task.
By pre-training the Q-network in this way, the DQN agent has to adapt the previously learned solution to the test task.
Here, the pre-trained DQN agent initially repeats the previously learned behaviour---which is not optimal in any of the test tasks---and then has to re-learn the optimal policy for each test task.
This re-learning seems to negatively impact the overall performance of the agent and it would be more efficient to randomly initialize the network weights (Figure~\ref{fig:combination-lock-performance}).

This approach of adapting a pre-trained Q-network to a test task is used by both MAML and SF-based GPI.
While these methods rely on extracting information from multiple training tasks, the results in Figure~\ref{fig:combination-lock-performance} demonstrate that if training and test tasks differ sufficiently, then re-using a pre-trained Q-network to initialize learning may negatively impact performance and a new Q-network or policy may have to be learned from scratch~\citep{nemecek2021policycaches}.
Reward-predictive representations enable a more abstract form of task knowledge re-use that is more robust in this case.
This is illustrated by the reward-predictive agent in Figure~\ref{fig:combination-lock-performance} that outperforms the other two agents.
The reward-predictive agent (shown in green in Figure~\ref{fig:combination-lock-performance}) sets all weights except for the top-most linear layer to the weights of the reward-predictive representation network learned by the clustering algorithm for the training task (Figure~\ref{fig:combination-lock-dqn}).
Furthermore, no weight updates are performed on the representation network itself---only the weights of the top-most linear layer are updated during learning in the test task.
By re-using the pre-trained representation network, the reward-predictive agent maps all state images into one of the 100 pre-trained latent states resulting in a significant performance improvement.
This performance improvement constitutes a form of systematic out-of-distribution generalization, because the reward-predictive representation network is not adjusted during training and because trajectories observed when interacting with the test task are out-of-distribution of the trajectories observed during pre-training.

Interestingly, in the left dial broken variant the performance improvement of the reward-predictive agent is even more significant.
This result is unexpected, because in this case the state equivalences modelled by the transferred representation function differ between the training and the test tasks:
In the training task, the right dial is irrelevant for decision-making and can be abstracted away whereas in the test task the left dial is irrelevant for decision-making and can be abstracted away instead.
Consequently, a representation that is reward-predictive in the training task is not reward-predictive in the left dial broken test task and an RL agent would have to re-train a previously learned representation for it be reward predictive in the test task. 
Nevertheless, the reward-predictive representation network can still be used to maximize rewards in this task variant:
The agent first learns to rotate the center dial to the rewarding digit ``9''.
This is possible because the network can still leverage parts of the reward-predictive abstraction that remain useful for the new task. 
In this case, the center digits are still important as they were in the original task and the reward-predictive representation network maps distinct center digits to distinct latent states, 
although the combination $(1,9,*)$ and $(2,9,*)$ are mapped to different latent states given the representation learned in the training task.
Once the center dial is set to the digit ``9'', the agent can simply learn a high Q-value for the action associated with rotating the third dial, and it does so until the rewarding combination is received.
Because the reward predictive agent is a variant of DQN and initializes Q-values to be close to zero, the moment the algorithm increases a Q-value through a temporal-difference update, the agent keeps repeating this action with every greedy action selection step and does not explore all possible states, resulting in a significant performance improvement.\footnote{For all experiments we use a $\varepsilon$-greedy action selection strategy that initially selects actions uniformly at random but becomes greedy with respect to the predicted Q-values within the first 10 episodes.}
While the reward-predictive representation network cannot be used to predict reward-sequences or event Q-values accurately, the Q-value predictions learned by the agent are sufficient to still find an optimal policy quickly in this test task.
Of course, one could imagine test tasks where this is not the case and the agent would have to learn a new policy from scratch.

This experiment highlights how reward-predictive representation networks can be used for systematic out-of-distribution generalization.
Because the representation network only encodes state equivalences, the network can be used across tasks with different transitions and rewards.
However, if different state equivalences are necessary for reward prediction in a test task, then it may or may not be possible to learn an optimal policy without modifying the representation network.
The left dial broken test task in Figure~\ref{fig:combination-lock-experiment} presents a case where state equivalences differ from the training task but it is still possible to accelerate learning of an optimal policy significantly.


\section{Discussion}

In this article, we present a clustering algorithm to compute reward-predictive representations that use as few latent states as possible.
Unlike prior work~\citep{lehnert2020lsfm,lehnert2018modelfeatures}, which learns reward-predictive representations through end-to-end gradient descent, our approach is similar to the block splitting method presented by~\citet{givan2003bisimulation} for learning which two states are bisimilar in an MDP.
By starting with a single latent state and then iteratively introducing additional latent states to minimize SF prediction errors where necessary, the final number of latent states is minimized.
Intuitively, this refinement is similar to temporal-difference learning, where values are first updated where rewards occur and subsequently value updates are bootstrapped at other states.
The clustering algorithm computes a reward-predictive representation in a similar way, by first refining a state representation around changes in one-step rewards and subsequently bootstrapping from this representation to further refine the state clustering.
This leads to a maximally compressed latent state space, which is important for abstracting away information from the state input and enabling an agent to efficiently generalize across states (as demonstrated by the generalization experiments in Section~\ref{sec:improving-learning-efficiency}).
Such latent state space compression cannot be accomplished by auto-encoder based architectures~\citep{ha2018worldmodel} or frame prediction architectures~\citep{oh2015frameprediction,leibfried2016modelplusr,weber2017imagination} because a decoder network requires the latent state to be predictive of the entire task state.
Therefore, these methods encode the entire task state in a latent state without abstracting any part of the task state information away.

Prior work~\citep{ferns2004bisimmetrics,comanici2015basis,gelada2019deepmdp,zhang2020bmdptransfer,zhang2021invariantrepresentations} has focused on using the Wasserstein metric to measure how bisimilar two states are.
Computing the Wasserstein metric between two states is often difficult in practice, because it requires solving an optimization problem for every distance calculation and it assumes a measurable state space---an assumption that is difficult to satisfy when working with visual control tasks for example.
Here, approximations of the Wasserstein metric are often used but these methods introduce other assumptions instead, such as a normally distributed next latent states~\citep{zhang2021invariantrepresentations} or a Lipschitz continuous transition function where the Lipschitz factor is $1/\gamma$~\citep{gelada2019deepmdp}\footnote{Here, $\gamma \in (0,1)$ is the discount factor.}.
The presented refinement method does not require such assumptions, because the presented algorithm directly clusters one-step rewards and SFs for arbitrary transition and reward functions.
SFs, which encode the frequencies of future states, provide a different avenue to computing which two states are bisimilar without requiring a distance function on probability distributions such as the Wasserstein metric.
Nonetheless, using the Wasserstein metric to determine state bisimilarity may provide an avenue for over-compressing the latent state space at the expense of increasing prediction errors~\citep{ferns2004bisimmetrics,comanici2015basis} (for example, compressing the Combination Lock task into 90 latent states instead of 100).

A key challenge in scaling model-based RL algorithms is the fact that these agents are evaluated on their predictive performance.
Consequently, any approximation errors (caused by not adhering to the $\varepsilon$-perfection assumption illustrated in Figure~\ref{fig:epsilon-cluster}) impact the resulting model's predictive performance---a property common to model-based RL algorithms~\citep{talvitie2017approxmb,talvitie2018rewardsformisspecifiedmodel,asadi2018lipschitzmb}. 
Evaluating a model's predictive performance is more stringent than what is typically used for model-free RL algorithms such as DQN.
Typically, model-free RL algorithms are evaluated on the learned optimal policy's performance and are not evaluated on their predictive performance.
For example, while DQN can learn an optimal policy for a task, the learned Q-network's prediction errors may still be high for some inputs~\citep{witty2018generalization}.
Prediction errors of this type are often tolerated, because model-free RL algorithms are benchmarked based on the learned policy's ability to maximize rewards and not their accuracy of predicting quantities such as Q-values or rewards.
This is the case for most existing deep RL algorithms that are effectively model-based and model-free hybrid architectures~\citep{oh2017value,silver2017predictron,gelada2019deepmdp,schrittwieser2019muzero,zhang2021invariantrepresentations}---these models predict reward-sequences only over very short horizons (for example,~\citet{oh2017value} use 10 time steps).
In contrast, reward-predictive representations are evaluated for their prediction accuracy.
To achieve low prediction errors, the presented results suggest that finding $\varepsilon$-perfect approximations becomes important.
Furthermore, the simulations on the MNIST combination-lock task demonstrate that this goal can be accomplished by using a larger neural network architecture.

To compute a maximally compressed representation, the presented clustering algorithm needs to have access to the entire trajectory training data set at once.
How to implement this algorithm in an online learning setting---a setting where the agent observes the different transitions and rewards of a task as a data stream---is not clear at this point.
To implement an online learning algorithm, an agent would need to assign incoming state observations to already existing state partitions.
Without such an operation it would not be possible to compute a reward-predictive representation that still abstracts away certain aspects from the state itself.
Because the presented clustering method is based on the idea of refining state partitions, it is currently difficult to design an online learning agent that does not always re-run the full clustering algorithm on the history of all transitions the agent observed.

One assumption made in the presented experiments is that a task's state space can always be compressed into a small enough finite latent space.
This assumption is not restrictive, because any (discrete time) RL agent only observes a finite number of transitions and states at any given time point.
Consequently, all state observations can always be compressed into a finite number of latent states, similar to block MDPs~\citep{du2019bmdp}.
Furthermore, the presented method always learns a fully conjunctive representation.
In the combination-lock examples, the reward-predictive representation associates a different latent state (one-hot vector) with each relevant combination pattern.
This representation is conjunctive because it does not model the fact that the dials rotate independently.
A disjunctive or factored representation could map each of the three dials independently into three separate latent state vectors and a concatenation of these vectors could be used to describe the task's latent state.
Such a latent representation is similar to factored representations used in prior work~\citep{guestrin2003factoredmdp,diuk2008oomdp} and these factored representations permit a more compositional form of generalization across different tasks~\citep{kansky2017schemarl,battaglia2016interactionnetworks,chang2016neuralphysicsengine}.
How to extract such factored representations from unstructured state spaces such as images still remains a challenging problem.
We leave such an extension to future work.

Prior work on (Deep) SF transfer~\citep{barreto2018deepsf,barreto2020fastsf,kulkarni2016deep,zhang2017deepsucc}, meta-learning~\citep{finn2017maml}, or multi-task learning~\citep{rusu2015policydistillation,eramo2020sharingq} has focused on extracting an inductive bias from a set of tasks to accelerate learning in subsequent tasks.
These methods transfer a value function or policy model to initialize and accelerate learning.
Because these methods transfer a model of a task's policy, these models have to be adapted to each transfer task, if the transfer task's optimal policy differs from the previously learned policies.
Reward-predictive representations overcome this limitation by only modelling how to generalize across different states.
Because reward-predictive representations do not encode the specifics of how to transition between different latent states or how latent states are tied to rewards, these representations are robust to changes in transitions and rewards.
Furthermore, the reward-predictive representation network is learned using a single task and the resulting network is sufficient to demonstrate positive transfer across different transitions and rewards.
This form of transfer is also different from the method presented by~\citet{zhang2020bmdptransfer}, where the focus is on extracting a common task structure from a set of tasks instead of learning a representation from a single task and transferring it to different test tasks.
Still, in a lifelong learning scenario, re-using the same reward-predictive representation network to solve every task may not be possible because an agent may have to generalize across different states (as demonstrated by the left dial broken combination lock variant in Section~\ref{sec:improving-learning-efficiency}).
In this article, we analyze the generalization properties of reward-predictive representations through A-B transfer experiments.
While~\citet{lehnert2020reward} already present a (non-parametric) meta-learning model that uses reward-predictive representations to accelerate learning in finite MDPs, we leave how to integrate the presented clustering algorithm into existing meta-learning frameworks commonly used in deep RL---such as~\citet{barreto2018deepsf} or~\citet{finn2017maml}---for future work.

\section{Conclusion}

We presented a clustering algorithm to compute reward-predictive representations that introduces as few latent states as possible.
The algorithm works by iteratively refining a state representation using a temporal difference error that is defined on state features.
Furthermore, we analyze under which assumptions the resulting representation networks are suitable for systematic out-of-distribution generalization and demonstrate that reward-predictive representation networks enable RL agents to re-use abstract task knowledge to improve their learning efficiency.

\subsubsection*{Acknowledgments}
We would like to thank Alana Jaskir for insightful discussions on this work.
Lucas Lehnert performed most of this work at Brown University.
At Brown University, he was funded in part by NIH T32MH115895 Training program for Interactionist Cognitive Neuroscience.

\bibliographystyle{tmlr}

\begin{thebibliography}{55}
\providecommand{\natexlab}[1]{#1}
\providecommand{\url}[1]{\texttt{#1}}
\expandafter\ifx\csname urlstyle\endcsname\relax
  \providecommand{\doi}[1]{doi: #1}\else
  \providecommand{\doi}{doi: \begingroup \urlstyle{rm}\Url}\fi

\bibitem[Abel et~al.(2018)Abel, Arumugam, Lehnert, and Littman]{abel2018sa}
David Abel, Dilip Arumugam, Lucas Lehnert, and Michael Littman.
\newblock State abstractions for lifelong reinforcement learning.
\newblock In Jennifer Dy and Andreas Krause (eds.), \emph{Proceedings of the
  35th International Conference on Machine Learning}, volume~80 of
  \emph{Proceedings of Machine Learning Research}, pp.\  10--19,
  Stockholmsm{\"a}ssan, Stockholm Sweden, 10--15 Jul 2018. PMLR.
\newblock URL \url{http://proceedings.mlr.press/v80/abel18a.html}.

\bibitem[Asadi et~al.(2018)Asadi, Misra, and Littman]{asadi2018lipschitzmb}
Kavosh Asadi, Dipendra Misra, and Michael Littman.
\newblock {L}ipschitz continuity in model-based reinforcement learning.
\newblock In Jennifer Dy and Andreas Krause (eds.), \emph{Proceedings of the
  35th International Conference on Machine Learning}, volume~80 of
  \emph{Proceedings of Machine Learning Research}, pp.\  264--273,
  Stockholmsm{\"a}ssan, Stockholm Sweden, 10--15 Jul 2018. PMLR.
\newblock URL \url{http://proceedings.mlr.press/v80/asadi18a.html}.

\bibitem[Barreto et~al.(2017)Barreto, Dabney, Munos, Hunt, Schaul, van Hasselt,
  and Silver]{barreto2017sf}
Andr{\'e} Barreto, Will Dabney, R{\'e}mi Munos, Jonathan~J Hunt, Tom Schaul,
  Hado~P van Hasselt, and David Silver.
\newblock Successor features for transfer in reinforcement learning.
\newblock In \emph{Advances in Neural Information Processing Systems}, pp.\
  4055--4065, 2017.

\bibitem[Barreto et~al.(2018)Barreto, Borsa, Quan, Schaul, Silver, Hessel,
  Mankowitz, Zidek, and Munos]{barreto2018deepsf}
Andre Barreto, Diana Borsa, John Quan, Tom Schaul, David Silver, Matteo Hessel,
  Daniel Mankowitz, Augustin Zidek, and Remi Munos.
\newblock Transfer in deep reinforcement learning using successor features and
  generalised policy improvement.
\newblock In Jennifer Dy and Andreas Krause (eds.), \emph{Proceedings of the
  35th International Conference on Machine Learning}, volume~80 of
  \emph{Proceedings of Machine Learning Research}, pp.\  501--510,
  Stockholmsm{\"a}ssan, Stockholm Sweden, 10--15 Jul 2018. PMLR.
\newblock URL \url{http://proceedings.mlr.press/v80/barreto18a.html}.

\bibitem[Barreto et~al.(2020)Barreto, Hou, Borsa, Silver, and
  Precup]{barreto2020fastsf}
Andr{\'e} Barreto, Shaobo Hou, Diana Borsa, David Silver, and Doina Precup.
\newblock Fast reinforcement learning with generalized policy updates.
\newblock \emph{Proceedings of the National Academy of Sciences}, 117\penalty0
  (48):\penalty0 30079--30087, 2020.

\bibitem[Battaglia et~al.(2016)Battaglia, Pascanu, Lai, Rezende, and
  kavukcuoglu]{battaglia2016interactionnetworks}
Peter Battaglia, Razvan Pascanu, Matthew Lai, Danilo~Jimenez Rezende, and Koray
  kavukcuoglu.
\newblock Interaction networks for learning about objects, relations and
  physics.
\newblock In \emph{Proceedings of the 30th International Conference on Neural
  Information Processing Systems}, NIPS'16, pp.\  4509–4517, Red Hook, NY,
  USA, 2016. Curran Associates Inc.
\newblock ISBN 9781510838819.

\bibitem[Belkin et~al.(2019)Belkin, Hsu, Ma, and
  Mandal]{belkin2019doubledescent}
Mikhail Belkin, Daniel Hsu, Siyuan Ma, and Soumik Mandal.
\newblock Reconciling modern machine-learning practice and the classical
  bias{\textendash}variance trade-off.
\newblock \emph{Proceedings of the National Academy of Sciences}, 116\penalty0
  (32):\penalty0 15849--15854, 2019.
\newblock ISSN 0027-8424.
\newblock \doi{10.1073/pnas.1903070116}.
\newblock URL \url{https://www.pnas.org/content/116/32/15849}.

\bibitem[Chang et~al.(2016)Chang, Ullman, Torralba, and
  Tenenbaum]{chang2016neuralphysicsengine}
Michael~B Chang, Tomer Ullman, Antonio Torralba, and Joshua~B Tenenbaum.
\newblock A compositional object-based approach to learning physical dynamics.
\newblock \emph{arXiv preprint arXiv:1612.00341}, 2016.

\bibitem[Comanici et~al.(2015)Comanici, Precup, and
  Panangaden]{comanici2015basis}
Gheorghe Comanici, Doina Precup, and Prakash Panangaden.
\newblock Basis refinement strategies for linear value function approximation
  in {MDP}s.
\newblock In \emph{Advances in Neural Information Processing Systems}, pp.\
  2899--2907, 2015.

\bibitem[Dayan(1993)]{dayan1993successor}
Peter Dayan.
\newblock Improving generalization for temporal difference learning: The
  successor representation.
\newblock \emph{Neural Computation}, 5\penalty0 (4):\penalty0 613--624, 1993.

\bibitem[D'Eramo et~al.(2020)D'Eramo, Tateo, Bonarini, Restelli, and
  Peters]{eramo2020sharingq}
Carlo D'Eramo, Davide Tateo, Andrea Bonarini, Marcello Restelli, and Jan
  Peters.
\newblock Sharing knowledge in multi-task deep reinforcement learning.
\newblock In \emph{International Conference on Learning Representations}, 2020.
\newblock URL \url{https://openreview.net/forum?id=rkgpv2VFvr}.

\bibitem[Diuk et~al.(2008)Diuk, Cohen, and Littman]{diuk2008oomdp}
Carlos Diuk, Andre Cohen, and Michael~L. Littman.
\newblock An object-oriented representation for efficient reinforcement
  learning.
\newblock In \emph{Proceedings of the 25th International Conference on Machine
  Learning}, ICML '08, pp.\  240–247, New York, NY, USA, 2008. Association
  for Computing Machinery.
\newblock ISBN 9781605582054.
\newblock \doi{10.1145/1390156.1390187}.
\newblock URL \url{https://doi.org/10.1145/1390156.1390187}.

\bibitem[Du et~al.(2019)Du, Krishnamurthy, Jiang, Agarwal, Dud{\'\i}k, and
  Langford]{du2019bmdp}
Simon~S Du, Akshay Krishnamurthy, Nan Jiang, Alekh Agarwal, Miroslav
  Dud{\'\i}k, and John Langford.
\newblock Provably efficient rl with rich observations via latent state
  decoding.
\newblock \emph{arXiv preprint arXiv:1901.09018}, 2019.

\bibitem[Ferns et~al.(2004)Ferns, Panangaden, and
  Precup]{ferns2004bisimmetrics}
Norm Ferns, Prakash Panangaden, and Doina Precup.
\newblock Metrics for finite markov decision processes.
\newblock In \emph{Proceedings of the 20th conference on Uncertainty in
  artificial intelligence}, pp.\  162--169. AUAI Press, 2004.

\bibitem[Finn et~al.(2017)Finn, Abbeel, and Levine]{finn2017maml}
Chelsea Finn, Pieter Abbeel, and Sergey Levine.
\newblock Model-agnostic meta-learning for fast adaptation of deep networks.
\newblock \emph{arXiv preprint arXiv:1703.03400}, 2017.

\bibitem[Franklin \& Frank(2018)Franklin and Frank]{franklin2018compositional}
Nicholas~T Franklin and Michael~J Frank.
\newblock Compositional clustering in task structure learning.
\newblock \emph{PLoS computational biology}, 14\penalty0 (4):\penalty0
  e1006116, 2018.

\bibitem[Gelada et~al.(2019)Gelada, Kumar, Buckman, Nachum, and
  Bellemare]{gelada2019deepmdp}
Carles Gelada, Saurabh Kumar, Jacob Buckman, Ofir Nachum, and Marc~G Bellemare.
\newblock Deep{MDP}: Learning continuous latent space models for representation
  learning.
\newblock In \emph{International Conference on Machine Learning}, pp.\
  2170--2179, 2019.

\bibitem[Givan et~al.(2003)Givan, Dean, and Greig]{givan2003bisimulation}
Robert Givan, Thomas Dean, and Matthew Greig.
\newblock Equivalence notions and model minimization in {M}arkov decision
  processes.
\newblock \emph{Artificial Intelligence}, 147\penalty0 (1):\penalty0 163--223,
  2003.

\bibitem[Glorot \& Bengio(2010)Glorot and Bengio]{glorot2010initialization}
Xavier Glorot and Yoshua Bengio.
\newblock Understanding the difficulty of training deep feedforward neural
  networks.
\newblock In Yee~Whye Teh and Mike Titterington (eds.), \emph{Proceedings of
  the Thirteenth International Conference on Artificial Intelligence and
  Statistics}, volume~9 of \emph{Proceedings of Machine Learning Research},
  pp.\  249--256, Chia Laguna Resort, Sardinia, Italy, 13--15 May 2010. JMLR
  Workshop and Conference Proceedings.
\newblock URL \url{http://proceedings.mlr.press/v9/glorot10a.html}.

\bibitem[Goodfellow et~al.(2016)Goodfellow, Bengio, and
  Courville]{goodfellow2016deeplearning}
Ian Goodfellow, Yoshua Bengio, and Aaron Courville.
\newblock \emph{Deep Learning}.
\newblock MIT Press, 2016.
\newblock \url{http://www.deeplearningbook.org}.

\bibitem[Guestrin et~al.(2003)Guestrin, Koller, Parr, and
  Venkataraman]{guestrin2003factoredmdp}
Carlos Guestrin, Daphne Koller, Ronald Parr, and Shobha Venkataraman.
\newblock Efficient solution algorithms for factored mdps.
\newblock \emph{Journal of Artificial Intelligence Research}, 19:\penalty0
  399--468, 2003.

\bibitem[Ha \& Schmidhuber(2018)Ha and Schmidhuber]{ha2018worldmodel}
David Ha and J{\"u}rgen Schmidhuber.
\newblock World models.
\newblock \emph{arXiv preprint arXiv:1803.10122}, 2018.

\bibitem[He et~al.(2016)He, Zhang, Ren, and Sun]{he2016resnet}
Kaiming He, Xiangyu Zhang, Shaoqing Ren, and Jian Sun.
\newblock Deep residual learning for image recognition.
\newblock In \emph{Proceedings of the IEEE conference on computer vision and
  pattern recognition}, pp.\  770--778, 2016.

\bibitem[Kansky et~al.(2017)Kansky, Silver, M{\'e}ly, Eldawy,
  L{\'a}zaro-Gredilla, Lou, Dorfman, Sidor, Phoenix, and
  George]{kansky2017schemarl}
Ken Kansky, Tom Silver, David~A M{\'e}ly, Mohamed Eldawy, Miguel
  L{\'a}zaro-Gredilla, Xinghua Lou, Nimrod Dorfman, Szymon Sidor, Scott
  Phoenix, and Dileep George.
\newblock Schema networks: Zero-shot transfer with a generative causal model of
  intuitive physics.
\newblock \emph{arXiv preprint arXiv:1706.04317}, 2017.

\bibitem[Kingma \& Ba(2014)Kingma and Ba]{kingma2014adam}
Diederik~P. Kingma and Jimmy Ba.
\newblock Adam: {A} method for stochastic optimization.
\newblock \emph{CoRR}, abs/1412.6980, 2014.
\newblock URL \url{http://arxiv.org/abs/1412.6980}.

\bibitem[Kulkarni et~al.(2016)Kulkarni, Saeedi, Gautam, and
  Gershman]{kulkarni2016deep}
Tejas~D Kulkarni, Ardavan Saeedi, Simanta Gautam, and Samuel~J Gershman.
\newblock Deep successor reinforcement learning.
\newblock \emph{arXiv preprint arXiv:1606.02396}, 2016.

\bibitem[{Lecun} et~al.(1998){Lecun}, {Bottou}, {Bengio}, and
  {Haffner}]{lecun1998mnist}
Y.~{Lecun}, L.~{Bottou}, Y.~{Bengio}, and P.~{Haffner}.
\newblock Gradient-based learning applied to document recognition.
\newblock \emph{Proceedings of the IEEE}, 86\penalty0 (11):\penalty0
  2278--2324, 1998.
\newblock \doi{10.1109/5.726791}.

\bibitem[Lehnert \& Littman(2018)Lehnert and Littman]{lehnert2018modelfeatures}
Lucas Lehnert and Michael~L Littman.
\newblock Transfer with model features in reinforcement learning.
\newblock \emph{arXiv preprint arXiv:1807.01736}, 2018.

\bibitem[Lehnert \& Littman(2020)Lehnert and Littman]{lehnert2020lsfm}
Lucas Lehnert and Michael~L Littman.
\newblock Successor features combine elements of model-free and model-based
  reinforcement learning.
\newblock \emph{Journal of Machine Learning Research}, 21\penalty0
  (196):\penalty0 1--53, 2020.

\bibitem[Lehnert et~al.(2017)Lehnert, Tellex, and Littman]{lehnert2017sf}
Lucas Lehnert, Stefanie Tellex, and Michael~L Littman.
\newblock Advantages and limitations of using successor features for transfer
  in reinforcement learning.
\newblock \emph{arXiv preprint arXiv:1708.00102}, 2017.

\bibitem[Lehnert et~al.(2020)Lehnert, Littman, and Frank]{lehnert2020reward}
Lucas Lehnert, Michael~L Littman, and Michael~J Frank.
\newblock Reward-predictive representations generalize across tasks in
  reinforcement learning.
\newblock \emph{PLoS computational biology}, 16\penalty0 (10):\penalty0
  e1008317, 2020.

\bibitem[Leibfried et~al.(2016)Leibfried, Kushman, and
  Hofmann]{leibfried2016modelplusr}
Felix Leibfried, Nate Kushman, and Katja Hofmann.
\newblock A deep learning approach for joint video frame and reward prediction
  in atari games.
\newblock \emph{arXiv preprint arXiv:1611.07078}, 2016.

\bibitem[Levine et~al.(2020)Levine, Kumar, Tucker, and Fu]{levine2020offlinerl}
Sergey Levine, Aviral Kumar, George Tucker, and Justin Fu.
\newblock Offline reinforcement learning: Tutorial, review, and perspectives on
  open problems.
\newblock \emph{CoRR}, abs/2005.01643, 2020.
\newblock URL \url{https://arxiv.org/abs/2005.01643}.

\bibitem[Mnih et~al.(2013)Mnih, Kavukcuoglu, Silver, Graves, Antonoglou,
  Wierstra, and Riedmiller]{minh2013dqn}
Volodymyr Mnih, Koray Kavukcuoglu, David Silver, Alex Graves, Ioannis
  Antonoglou, Daan Wierstra, and Martin~A. Riedmiller.
\newblock Playing atari with deep reinforcement learning.
\newblock \emph{CoRR}, abs/1312.5602, 2013.
\newblock URL \url{http://arxiv.org/abs/1312.5602}.

\bibitem[Mnih et~al.(2015)Mnih, Kavukcuoglu, Silver, Rusu, Veness, Bellemare,
  Graves, Riedmiller, Fidjeland, Ostrovski, et~al.]{mnih2015dqn}
Volodymyr Mnih, Koray Kavukcuoglu, David Silver, Andrei~A Rusu, Joel Veness,
  Marc~G Bellemare, Alex Graves, Martin Riedmiller, Andreas~K Fidjeland, Georg
  Ostrovski, et~al.
\newblock Human-level control through deep reinforcement learning.
\newblock \emph{Nature}, 518\penalty0 (7540):\penalty0 529--533, 2015.

\bibitem[Momennejad et~al.(2017)Momennejad, Russek, Cheong, Botvinick, Daw, and
  Gershman]{momennejad2017successor}
Ida Momennejad, Evan~M Russek, Jin~H Cheong, Matthew~M Botvinick, ND~Daw, and
  Samuel~J Gershman.
\newblock The successor representation in human reinforcement learning.
\newblock \emph{Nature Human Behaviour}, 1\penalty0 (9):\penalty0 680, 2017.

\bibitem[Nemecek \& Parr(2021)Nemecek and Parr]{nemecek2021policycaches}
Mark Nemecek and Ronald Parr.
\newblock Policy caches with successor features.
\newblock In Marina Meila and Tong Zhang (eds.), \emph{Proceedings of the 38th
  International Conference on Machine Learning}, volume 139 of
  \emph{Proceedings of Machine Learning Research}, pp.\  8025--8033. PMLR,
  18--24 Jul 2021.
\newblock URL \url{https://proceedings.mlr.press/v139/nemecek21a.html}.

\bibitem[Oh et~al.(2015)Oh, Guo, Lee, Lewis, and Singh]{oh2015frameprediction}
Junhyuk Oh, Xiaoxiao Guo, Honglak Lee, Richard~L Lewis, and Satinder Singh.
\newblock Action-conditional video prediction using deep networks in atari
  games.
\newblock In \emph{Advances in Neural Information Processing Systems}, pp.\
  2863--2871, 2015.

\bibitem[Oh et~al.(2017)Oh, Singh, and Lee]{oh2017value}
Junhyuk Oh, Satinder Singh, and Honglak Lee.
\newblock Value prediction network.
\newblock \emph{arXiv preprint arXiv:1707.03497}, 2017.

\bibitem[Paszke et~al.(2019)Paszke, Gross, Massa, Lerer, Bradbury, Chanan,
  Killeen, Lin, Gimelshein, Antiga, Desmaison, Kopf, Yang, DeVito, Raison,
  Tejani, Chilamkurthy, Steiner, Fang, Bai, and Chintala]{pytorch}
Adam Paszke, Sam Gross, Francisco Massa, Adam Lerer, James Bradbury, Gregory
  Chanan, Trevor Killeen, Zeming Lin, Natalia Gimelshein, Luca Antiga, Alban
  Desmaison, Andreas Kopf, Edward Yang, Zachary DeVito, Martin Raison, Alykhan
  Tejani, Sasank Chilamkurthy, Benoit Steiner, Lu~Fang, Junjie Bai, and Soumith
  Chintala.
\newblock Pytorch: An imperative style, high-performance deep learning library.
\newblock In H.~Wallach, H.~Larochelle, A.~Beygelzimer, F.~d\textquotesingle
  Alch\'{e}-Buc, E.~Fox, and R.~Garnett (eds.), \emph{Advances in Neural
  Information Processing Systems}, volume~32. Curran Associates, Inc., 2019.
\newblock URL
  \url{https://proceedings.neurips.cc/paper/2019/file/bdbca288fee7f92f2bfa9f7012727740-Paper.pdf}.

\bibitem[Russek et~al.(2017)Russek, Momennejad, Botvinick, Gershman, and
  Daw]{russek2017predictive}
Evan~M Russek, Ida Momennejad, Matthew~M Botvinick, Samuel~J Gershman, and
  Nathaniel~D Daw.
\newblock Predictive representations can link model-based reinforcement
  learning to model-free mechanisms.
\newblock \emph{PLoS computational biology}, 13\penalty0 (9):\penalty0
  e1005768, 2017.

\bibitem[Rusu et~al.(2015)Rusu, Colmenarejo, Gulcehre, Desjardins, Kirkpatrick,
  Pascanu, Mnih, Kavukcuoglu, and Hadsell]{rusu2015policydistillation}
Andrei~A Rusu, Sergio~Gomez Colmenarejo, Caglar Gulcehre, Guillaume Desjardins,
  James Kirkpatrick, Razvan Pascanu, Volodymyr Mnih, Koray Kavukcuoglu, and
  Raia Hadsell.
\newblock Policy distillation.
\newblock \emph{arXiv preprint arXiv:1511.06295}, 2015.

\bibitem[Schrittwieser et~al.(2019)Schrittwieser, Antonoglou, Hubert, Simonyan,
  Sifre, Schmitt, Guez, Lockhart, Hassabis, Graepel,
  et~al.]{schrittwieser2019muzero}
Julian Schrittwieser, Ioannis Antonoglou, Thomas Hubert, Karen Simonyan,
  Laurent Sifre, Simon Schmitt, Arthur Guez, Edward Lockhart, Demis Hassabis,
  Thore Graepel, et~al.
\newblock Mastering atari, go, chess and shogi by planning with a learned
  model.
\newblock \emph{arXiv preprint arXiv:1911.08265}, 2019.

\bibitem[Silver et~al.(2016)Silver, Huang, Maddison, Guez, Sifre, Van
  Den~Driessche, Schrittwieser, Antonoglou, Panneershelvam, Lanctot,
  et~al.]{silver2016alphago}
David Silver, Aja Huang, Chris~J Maddison, Arthur Guez, Laurent Sifre, George
  Van Den~Driessche, Julian Schrittwieser, Ioannis Antonoglou, Veda
  Panneershelvam, Marc Lanctot, et~al.
\newblock Mastering the game of go with deep neural networks and tree search.
\newblock \emph{Nature}, 529\penalty0 (7587):\penalty0 484--489, 2016.

\bibitem[Silver et~al.(2017{\natexlab{a}})Silver, Hasselt, Hessel, Schaul,
  Guez, Harley, Dulac-Arnold, Reichert, Rabinowitz, Barreto,
  et~al.]{silver2017predictron}
David Silver, Hado Hasselt, Matteo Hessel, Tom Schaul, Arthur Guez, Tim Harley,
  Gabriel Dulac-Arnold, David Reichert, Neil Rabinowitz, Andre Barreto, et~al.
\newblock The predictron: End-to-end learning and planning.
\newblock In \emph{International Conference on Machine Learning}, pp.\
  3191--3199. PMLR, 2017{\natexlab{a}}.

\bibitem[Silver et~al.(2017{\natexlab{b}})Silver, Schrittwieser, Simonyan,
  Antonoglou, Huang, Guez, Hubert, Baker, Lai, Bolton,
  et~al.]{silver2017alghagozero}
David Silver, Julian Schrittwieser, Karen Simonyan, Ioannis Antonoglou, Aja
  Huang, Arthur Guez, Thomas Hubert, Lucas Baker, Matthew Lai, Adrian Bolton,
  et~al.
\newblock Mastering the game of go without human knowledge.
\newblock \emph{Nature}, 550\penalty0 (7676):\penalty0 354--359,
  2017{\natexlab{b}}.

\bibitem[Sutton \& Barto(2018)Sutton and Barto]{sutton2018rlbook}
Richard~S Sutton and Andrew~G Barto.
\newblock \emph{Reinforcement Learning: An Introduction}.
\newblock MIT Press, 2018.

\bibitem[Talvitie(2017)]{talvitie2017approxmb}
Erik Talvitie.
\newblock Self-correcting models for model-based reinforcement learning.
\newblock In \emph{AAAI}, pp.\  2597--2603, 2017.

\bibitem[Talvitie(2018)]{talvitie2018rewardsformisspecifiedmodel}
Erik Talvitie.
\newblock Learning the reward function for a misspecified model.
\newblock In Jennifer Dy and Andreas Krause (eds.), \emph{Proceedings of the
  35th International Conference on Machine Learning}, volume~80 of
  \emph{Proceedings of Machine Learning Research}, pp.\  4838--4847,
  Stockholmsm{\"a}ssan, Stockholm Sweden, 10--15 Jul 2018. PMLR.
\newblock URL \url{http://proceedings.mlr.press/v80/talvitie18a.html}.

\bibitem[Vapnik(1992)]{vapnik1992empiricalriskminimization}
Vladimir Vapnik.
\newblock Principles of risk minimization for learning theory.
\newblock In \emph{Advances in neural information processing systems}, pp.\
  831--838, 1992.

\bibitem[Weber et~al.(2017)Weber, Racani{\`e}re, Reichert, Buesing, Guez,
  Rezende, Badia, Vinyals, Heess, Li, et~al.]{weber2017imagination}
Th{\'e}ophane Weber, S{\'e}bastien Racani{\`e}re, David~P Reichert, Lars
  Buesing, Arthur Guez, Danilo~Jimenez Rezende, Adria~Puigdom{\`e}nech Badia,
  Oriol Vinyals, Nicolas Heess, Yujia Li, et~al.
\newblock Imagination-augmented agents for deep reinforcement learning.
\newblock \emph{arXiv preprint arXiv:1707.06203}, 2017.

\bibitem[Witty et~al.(2018)Witty, Lee, Tosch, Atrey, Littman, and
  Jensen]{witty2018generalization}
Sam Witty, Jun~Ki Lee, Emma Tosch, Akanksha Atrey, Michael Littman, and David
  Jensen.
\newblock Measuring and characterizing generalization in deep reinforcement
  learning.
\newblock \emph{arXiv preprint arXiv:1812.02868}, 2018.

\bibitem[Zhang et~al.(2021{\natexlab{a}})Zhang, McAllister, Calandra, Gal, and
  Levine]{zhang2021invariantrepresentations}
Amy Zhang, Rowan McAllister, Roberto Calandra, Yarin Gal, and Sergey Levine.
\newblock Learning invariant representations for reinforcement learning without
  reconstruction.
\newblock \emph{arXiv preprint arXiv:2006.10742}, 2021{\natexlab{a}}.

\bibitem[Zhang et~al.(2021{\natexlab{b}})Zhang, Sodhani, Khetarpal, and
  Pineau]{zhang2020bmdptransfer}
Amy Zhang, Shagun Sodhani, Khimya Khetarpal, and Joelle Pineau.
\newblock Learning robust state abstractions for hidden-parameter block mdps,
  2021{\natexlab{b}}.

\bibitem[Zhang et~al.(2017)Zhang, Springenberg, Boedecker, and
  Burgard]{zhang2017deepsucc}
Jingwei Zhang, Jost~Tobias Springenberg, Joschka Boedecker, and Wolfram
  Burgard.
\newblock Deep reinforcement learning with successor features for navigation
  across similar environments.
\newblock In \emph{2017 IEEE/RSJ International Conference on Intelligent Robots
  and Systems (IROS)}, pp.\  2371--2378. IEEE, 2017.

\end{thebibliography}

\newpage

\begin{appendices}

\section{Linear Successor Feature Models}
\label{app:lsfm}
\citeauthor{lehnert2020lsfm} define LSFMs as a set of real-valued vectors $\{ \pmb{w}_a \}_{a \in \mathcal{A}}$ and real-valued square matrices $\{ \pmb{F}_a \}_{a \in \mathcal{A}}$ that are indexed by the different actions $a \in \mathcal{M}$ of an MDP.
Furthermore, LSFMs can be used to identify a reward-predictive representation function $\pmb{\phi}: \mathcal{S} \to \mathbb{R}^n$.
Specifically, if a state-representation function $\pmb{\phi}$ satisfies for all state-action pairs $(s,a)$
\begin{align}
    \pmb{w}_a^\top \pmb{\phi}(s) &= \mathbb{E}_p[r(s,a,s') |s,a] \label{eq:reward-condition-mdp} \\
    \text{and}~ \pmb{F}_a^\top \pmb{\phi}(s) &= \pmb{\phi}(s) + \gamma \overline{\pmb{F}}^\top \mathbb{E}_p[\pmb{\phi}(s') |s,a]~\text{where}~\overline{\pmb{F}} = \frac{1}{|\mathcal{A}|} \sum_{a' \in \mathcal{A}} \pmb{F}_{a'}, \label{eq:sf-condition-mdp}
\end{align}
then the state-representation function $\pmb{\phi}$ is reward-predictive.

Given a partition function $c$ and the trajectory data set $\mathcal{D}$, a LSFM can be computed.
For a partition $i$ the $i$th entry of the weight vector $\pmb{w}_a$ equals the one-step rewards averaged across all state observations and
\begin{equation}
    \pmb{w}_a(i) = \frac{1}{ | \{ (s,a,r,s') | c(s)=i \} | } \sum_{(s,a,r,s') | c(s) = i} r, \label{eq:reward-average}
\end{equation}
where the summation Equation~\ref{eq:reward-average} ranges over all transitions in $\mathcal{D}$ that start in partition $i$.
Similarly, the empirical partition-to-partition transition probabilities can be calculated and stored in a row-stochastic transition matrix $\pmb{M}_a$.
Each entry of this matrix is set to the empirical probability of transitioning from a partition $i$ to a partition $j$ and
\begin{equation}
    \pmb{M}_a(i,j) = \frac{ | \{ (s,a,r,s') | c(s)=i, c(s')=j \} | }{ | \{ (s,a,r,s') | c(s)=i \} | } .
\end{equation}
Using this partition-to-partition transition matrix, the matrices $\{ \pmb{F}_a \}_{a \in \mathcal{A}}$ can be calculated as outlined by~\citeauthor{lehnert2020lsfm} and
\begin{equation}
    \pmb{F}_a = \pmb{I} + \gamma \pmb{M}_a \pmb{F}~\text{and}~ \pmb{F} = (\pmb{I} - \gamma \overline{\pmb{M}})^{-1},
\end{equation}
where $\pmb{M} = \frac{1}{|\mathcal{A}|} \sum_{a \in \mathcal{A}} \pmb{M}_{a}$.

This calculation is used to compute the SF targets used for function approximation in Algorithm~\ref{alg:cluster}.

\section{Convergence proof}
\label{app:proofs}

\begin{definition}[Sub-clustering]\label{def:sub-clustering}
A clustering $c$ is a sub-clustering of $c^*$ if the following property holds:
\begin{equation}
    \forall s, \tilde{s},~ c(s) \ne c(\tilde{s}) \implies c^*(s) \ne c^*(\tilde{s}).
\end{equation}
\end{definition}

\begin{definition}[Maximally-Compressed-Reward-Predictive Clustering]\label{def:reward-predictive}
A maximally-compressed-reward-predictive representation is a function $c^*$ assigning every state $s \in \mathcal{S}$ to an index such that for all state-action pairs $(s,a)$
\begin{align}
    \big| \pmb{w}_a^\top \pmb{e}_{c*(s)} - \mathbb{E}_p[r(s,a,s') |s,a] \big| \le \varepsilon_r \label{eq:reward-condition} \\
    \text{and}~ \big| \pmb{F}_a^\top \pmb{e}_{c*(s)} - \pmb{\psi}^\pi_*(s,a) \big| \le \varepsilon_\psi, \label{eq:sf-condition}
\end{align}
where $\pmb{\psi}^\pi_*(s,a)$ are the SFs calculated for a state-representation function mapping a state $s$ to a one-hot bit vector $c^*(s)$.
Furthermore, this representation uses as few indices as possible.
\end{definition}

Definition~\ref{def:reward-predictive} implicitly makes the assumption that the state space of an arbitrary MDP can be partitioned into finitely many reward-predictive partitions.
While this may not be the case for all possible MDPs, this assumption is not restrictive when using the presented clustering algorithm.
Because the trajectory data set is finite, any algorithm only processes a finite subset of all possible states (even if state spaces are uncountable infinite) and therefore can always partition these state observations into a finite number of partitions.

\begin{property}[Refinement Property]\label{prop:refinement}
In Algorithm~\ref{alg:cluster}, every iteration refines the existing partitions until the termination condition is reached.
Specifically, for every iteration $c_i$ is a sub-clustering of $c_{i+1}$ and for any two distinct states $s$ and $\tilde{s}$, 
\begin{equation}
    c_i(s) \ne c_i(\tilde{s}) \implies c_{i+1}(s) \ne c_{i+1}(\tilde{s}).
\end{equation}
\end{property}

\begin{property}[Reward-predictive Splitting Property]\label{prop:reward-predictive}
Consider a maximally-compressed-reward-predictive representation encoded by the clustering $c^*$ and the cluster sequence $c_1,c_2,...$ generated by Algorithm~\ref{alg:cluster}.
For any two distinct states $s$ and $\tilde{s}$,
\begin{equation}
    c_i(s) \ne c_i(\tilde{s}) \implies c^*(s) \ne c^*(\tilde{s})
\end{equation}
\end{property}

\begin{lemma}[SF Separation]\label{lem:sf-separation}
For a cluster function $c_i$ and any arbitrary MDP, if
\begin{equation}
    \gamma < \frac{1}{2}~\text{and}~\frac{2}{3} \left( 1 - \frac{\gamma}{1 - \gamma} \right) > \varepsilon_\psi > 0, \label{eq:sf-separation-lemma}
\end{equation}
then
\begin{equation}
    || \pmb{\psi}^\pi_i(s,a) - \pmb{\psi}^\pi_i(\tilde{s},a) || \ge 3 \varepsilon_\psi
\end{equation}
for two states $s$ and $\tilde{s}$ that are assigned to two different partitions and $c_i(s) \ne c_i(\tilde{s})$.
\end{lemma}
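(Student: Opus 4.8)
The plan is to split each successor-feature vector into its undiscounted first term and a discounted tail, and then show that the mismatch created by the two distinct first terms cannot be cancelled by the tails when $\gamma$ is small. Using the definition in Equation~\eqref{eq:sf-c-i}, first I would write
\[
\pmb{\psi}^\pi_i(s,a) = \pmb{e}_{c_i(s)} + \pmb{t}(s,a), \quad \pmb{t}(s,a) = \mathbb{E}_{a,\pi}\!\left[ \sum_{t=2}^\infty \gamma^{t-1} \pmb{e}_{c_i(s_t)} \,\middle|\, s_1=s, a_1=a \right].
\]
The key structural observation is that $\pmb{t}(s,a)$ is a discounted sum of expectations of one-hot vectors; each such expectation is a probability vector with nonnegative entries summing to one, so every coordinate of $\pmb{t}(s,a)$ lies in $[0, \tfrac{\gamma}{1-\gamma}]$ because $\sum_{t=2}^\infty \gamma^{t-1} = \tfrac{\gamma}{1-\gamma}$. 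This entrywise bound holds for \emph{every} transition function, which is exactly what lets the argument cover an arbitrary MDP as required.

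Next I would isolate the two ``active'' coordinates $j = c_i(s)$ and $k = c_i(\tilde s)$, which are distinct by hypothesis. In coordinate $j$, the vector $\pmb{\psi}^\pi_i(s,a)$ receives a $1$ from its first term plus a nonnegative tail contribution, whereas $\pmb{\psi}^\pi_i(\tilde s, a)$ receives $0$ from its first term plus a tail contribution of at most $\tfrac{\gamma}{1-\gamma}$; hence the $j$th entry of the difference is at least $1 - \tfrac{\gamma}{1-\gamma}$. By the symmetric argument the $k$th entry has magnitude at least $1 - \tfrac{\gamma}{1-\gamma}$ as well. The assumption $\gamma < \tfrac12$ guarantees $\tfrac{\gamma}{1-\gamma} < 1$, so both lower bounds are strictly positive and the unit ``signal'' from the first term genuinely survives the worst-case tail interference.

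Finally I would aggregate the contributions of these two coordinates to obtain the separation bound
\[
\norm{\pmb{\psi}^\pi_i(s,a) - \pmb{\psi}^\pi_i(\tilde s,a)} \ge 2\left(1 - \frac{\gamma}{1-\gamma}\right),
\]
and then rearrange the hypothesis $\tfrac{2}{3}\big(1 - \tfrac{\gamma}{1-\gamma}\big) > \varepsilon_\psi$ into $2\big(1 - \tfrac{\gamma}{1-\gamma}\big) > 3\varepsilon_\psi$ to conclude the claim. I expect the main obstacle to be the tail-interference step together with pinning down the constants: one must verify that the worst-case discounted accumulation of probability mass into the ``wrong'' coordinate is exactly $\tfrac{\gamma}{1-\gamma}$, confirm how the two coordinate magnitudes combine to produce the factor of $2$, and check that $\gamma < \tfrac12$ and the factor $\tfrac23$ are precisely what force this worst case to remain below the surviving unit signal uniformly over all transition functions.
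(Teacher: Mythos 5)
Your proof is correct in substance and takes a genuinely different route from the paper's. The paper applies the one-step Bellman recursion $\pmb{\psi}^\pi_i(s,a) = \pmb{e}_{c_i(s)} + \gamma\,\mathbb{E}[\pmb{\psi}^\pi_i(s',a')\mid s,a]$ and then the reverse triangle inequality, bounding the \emph{entire} tail difference in norm by $\tfrac{2\gamma}{1-\gamma}$ (twice the bound $\tfrac{1}{1-\gamma}$ on a single SF vector); you instead argue coordinate-by-coordinate, observing that the discounted tail contributes at most $\tfrac{\gamma}{1-\gamma}$ to any single entry and that the two active coordinates $c_i(s)\ne c_i(\tilde s)$ each retain a residual of at least $1-\tfrac{\gamma}{1-\gamma}$. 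Your version is in fact slightly sharper, since it only charges the interference that actually lands in the two relevant coordinates rather than the full norm of the tail difference, and both arguments use $\gamma<\tfrac12$ for exactly the same purpose (keeping the interference strictly below the unit signal) and land on the same final inequality $2-\tfrac{2\gamma}{1-\gamma}\ge 3\varepsilon_\psi$. The one step you should tighten is the final aggregation: two coordinates each of magnitude at least $\delta$ give an L2 norm of at least $\sqrt{2}\,\delta$, not $2\delta$; the factor $2$ is the L1 aggregation. The paper's own proof contains the mirror image of the same slip (it takes $\|\pmb{e}_k-\pmb{e}_l\|=2$, which is the L1 value, whereas in the L2 norm declared in the paper it is $\sqrt{2}$), so your constants match the paper's; but strictly under the L2 norm both bounds should read $\sqrt{2}\bigl(1-\tfrac{\gamma}{1-\gamma}\bigr)$ (yours) or $\sqrt{2}-\tfrac{2\gamma}{1-\gamma}$ (the paper's), with the thresholds in the matching condition rescaled accordingly. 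This does not change the structure of either argument.
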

\begin{proof}[Proof of SF Separation Lemma~\ref{lem:sf-separation}]
First, we observe that the norm of a SF vector can be bounded with
\begin{align}
    \Bigg|\Bigg| \pmb{\psi}^\pi_i(s,a) \Bigg|\Bigg| &= \left|\left| \mathbb{E}_\pi \left[ \sum_{t=1}^\infty \gamma^{t-1} \pmb{e}_{c_i(s_t)} \middle| s=s_1,a \right] \right|\right| \\
    &= \Bigg|\Bigg| \sum_{t=1}^\infty \gamma^{t-1} \mathbb{E}_\pi \left[ \pmb{e}_{c_i(s_t)} \middle| s=s_1,a \right] \Bigg|\Bigg| &(\text{by linearity of expectation}) \\
    &\le \sum_{t=1}^\infty \gamma^{t-1} \underbrace{\Bigg|\Bigg| \mathbb{E}_\pi \left[ \pmb{e}_{c_i(s_t)} \middle| s=s_1,a \right] \Bigg|\Bigg|}_{\le 1} \\
    &= \sum_{t=1}^\infty \gamma^{t-1} \label{eq:sf-norm-derivation} \\
    &= \frac{1}{1 - \gamma} \label{eq:sf-norm}.
\end{align}
The transformation to line~\eqref{eq:sf-norm-derivation} uses the fact that expected values of one-hot vectors are always probability vectors.

Furthermore, we note that
\begin{equation}
    0 \le \gamma < \frac{1}{2} \implies \frac{2\gamma}{1 - \gamma} < 2. \label{eq:gamma-condition}
\end{equation}
The norm of the difference of SF vectors for two states $s$ and $\tilde{s}$ that start in different partitions can be bounded with
\begin{align}
    \big| \big| \pmb{\psi}^\pi_i(s,a) - \pmb{\psi}^\pi_i(\tilde{s},a) \big| \big| &= \big| \big| (\pmb{e}_k + \gamma \mathbb{E}[\pmb{\psi}^\pi_i(s',a')|s,a]) - (\pmb{e}_l + \gamma \mathbb{E}[\pmb{\psi}^\pi_i(s',a')|\tilde{s},a]) \big| \big| \\
    &= \big| \big| (\pmb{e}_k - \pmb{e}_l) + \gamma (\mathbb{E}[\pmb{\psi}^\pi_i(s',a')|s,a] - \mathbb{E}[\pmb{\psi}^\pi_i(s',a')|\tilde{s},a]) \big| \big|\\
    &= \big| \big| (\pmb{e}_k - \pmb{e}_l) - \gamma (\mathbb{E}[\pmb{\psi}^\pi_i(s',a')|\tilde{s},a] - \mathbb{E}[\pmb{\psi}^\pi_i(s',a')|s,a]) \big| \big| \\
    &\ge \Big| \underbrace{\big| \big| \pmb{e}_k - \pmb{e}_l \big| \big|}_{=2} - \gamma \big| \big| \mathbb{E}[\pmb{\psi}^\pi_i(s',a')|\tilde{s},a] - \mathbb{E}[\pmb{\psi}^\pi_i(s',a')|s,a] \big| \big| \Big| \\
    &= \Big| 2 - \underbrace{ \gamma \big| \big| \mathbb{E}[\pmb{\psi}^\pi_i(s',a')|\tilde{s},a] - \mathbb{E}[\pmb{\psi}^\pi_i(s',a')|s,a] \big| \big|}_{\text{$\in [0, \frac{2 \gamma}{1 - \gamma}]$ by~\eqref{eq:sf-norm} and $<2$ by~\eqref{eq:gamma-condition}}} \Big| \label{eq:sf-separation-proof-1} \\
    &= 2 - \gamma \big| \big| \mathbb{E}[\pmb{\psi}^\pi_i(s',a')|\tilde{s},a] - \mathbb{E}[\pmb{\psi}^\pi_i(s',a')|s,a] \big| \big| \label{eq:sf-separation-proof-2} \\
    &\ge 2 - \frac{2 \gamma}{1 - \gamma} \label{eq:sf-separation-proof-3}
\end{align}
The transformation to line~\eqref{eq:sf-separation-proof-1} holds because $s$ and $\tilde{s}$ start in different partitions and therefore $c_i(s) = k \ne c_i(\tilde{s}) = l$.
The transformation to line~\eqref{eq:sf-separation-proof-2} holds, because the norm of the difference of two SF vectors is bounded by $\frac{2}{1 - \gamma}$.
The term inside the absolute value calculation cannot possibly become negative because the discount factor $\gamma$ is set to be below $\frac{1}{2}$ and the bound in line~\eqref{eq:gamma-condition} holds.

Using the condition on the discount factor in line~\eqref{eq:sf-separation-lemma}, we have 
\begin{align}
    \frac{2}{3} \left( 1 - \frac{\gamma}{1 - \gamma} \right) \ge \varepsilon_\psi &\implies 2 - \frac{2\gamma}{1 - \gamma} \ge 3 \varepsilon_\psi &\text{(by~\eqref{eq:sf-separation-lemma})} \\
    &\implies || \pmb{\psi}^\pi_i(s,a) - \pmb{\psi}^\pi_i(\tilde{s},a) || \ge 3 \varepsilon_\psi. &\text{(by~\eqref{eq:sf-separation-proof-3})}
\end{align}
\end{proof}

\begin{definition}[Representation Projection Matrix]\label{def:projection-mat}
For a maximally-compressed-reward-predictive clustering $c^*$ and a sub-clustering $c_i$, we define a projection matrix $\pmb{\Phi}_i$ such that every entry 
\begin{equation}
    \pmb{\Phi}_i(k,l) = \begin{cases} 1 &\exists s ~\text{such that}~ c_i(s)=k ~\text{and}~ c^*(s)=l \\ 0 &\text{otherwise.} \end{cases} \label{eq:projection-mat}
\end{equation}
\end{definition}

\begin{lemma}[SF Projection]\label{lem:sf-projection}
For every state-action pair $(s,a)$, $\pmb{\psi}^\pi_i(s,a) = \pmb{\Phi}_i \pmb{\psi}^\pi_*(s,a)$.
\end{lemma}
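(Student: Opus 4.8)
The plan is to reduce the lemma to a single elementary identity about the projection matrix, namely $\pmb{\Phi}_i \pmb{e}_{c^*(s)} = \pmb{e}_{c_i(s)}$ (the identity asserted in line~\eqref{eq:latent-projection}), and then chain it through the definition of the successor features exactly as sketched in Equations~\eqref{eq:sf-c-i}--\eqref{eq:sf-projection}. The point is that once the one-hot identity is in place, the rest is linearity of expectation.

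First I would verify that $\pmb{\Phi}_i$ is well defined as a map sending one-hot vectors to one-hot vectors. By hypothesis $c_i$ is a sub-clustering of $c^*$ (this is built into the setup of Definition~\ref{def:projection-mat}, and Properties~\ref{prop:refinement} and~\ref{prop:reward-predictive} guarantee it holds at every iteration of Algorithm~\ref{alg:cluster}), so in the sense of Definition~\ref{def:sub-clustering} we have $c_i(s) \ne c_i(\tilde{s}) \implies c^*(s) \ne c^*(\tilde{s})$; equivalently $c^*(s) = c^*(\tilde{s}) \implies c_i(s) = c_i(\tilde{s})$. Now fix a column index $l$ of $\pmb{\Phi}_i$, i.e.\ a cluster of $c^*$. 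Every state $s$ with $c^*(s) = l$ shares one common value $c_i(s) = k$, so by Definition~\ref{def:projection-mat} the column $l$ of $\pmb{\Phi}_i$ has a single nonzero entry, located in row $k$. Multiplying $\pmb{\Phi}_i$ by $\pmb{e}_{c^*(s)}$ therefore selects column $c^*(s)$ and returns $\pmb{e}_k = \pmb{e}_{c_i(s)}$, which is precisely line~\eqref{eq:latent-projection}.

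With this identity in hand, the main computation mirrors Equations~\eqref{eq:sf-c-i}--\eqref{eq:sf-projection}. I would expand $\pmb{\psi}^\pi_i(s,a)$ using the successor-feature definition in Equation~\eqref{eq:sf-definition} applied to the one-hot representation $\pmb{\phi}_i$ of Equation~\eqref{eq:one-hot-construction}, giving $\pmb{\psi}^\pi_i(s,a) = \mathbb{E}_{a,\pi}[\sum_{t=1}^\infty \gamma^{t-1} \pmb{e}_{c_i(s_t)} \mid s_1 = s, a_1 = a]$. Applying $\pmb{e}_{c_i(s_t)} = \pmb{\Phi}_i \pmb{e}_{c^*(s_t)}$ termwise inside the sum and then pulling the constant matrix $\pmb{\Phi}_i$ out of both the summation and the expectation by linearity leaves exactly $\pmb{\Phi}_i \, \mathbb{E}_{a,\pi}[\sum_{t=1}^\infty \gamma^{t-1} \pmb{e}_{c^*(s_t)} \mid s_1 = s, a_1 = a] = \pmb{\Phi}_i \pmb{\psi}^\pi_*(s,a)$, which is the claimed equality.

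The only genuinely delicate step is the well-definedness argument in the second paragraph: the identity $\pmb{\Phi}_i \pmb{e}_{c^*(s)} = \pmb{e}_{c_i(s)}$ fails if some column of $\pmb{\Phi}_i$ carries more than one nonzero entry, which would occur precisely when two states in a common $c^*$-cluster were split apart by $c_i$. This is ruled out only because $c_i$ is a sub-clustering of $c^*$, so I would make sure to invoke Definition~\ref{def:sub-clustering} (guaranteed at every iteration by Properties~\ref{prop:refinement} and~\ref{prop:reward-predictive}) explicitly rather than treat the one-hot-to-one-hot behaviour of $\pmb{\Phi}_i$ as self-evident. The remaining manipulations---linearity of expectation and factoring out the action-independent matrix $\pmb{\Phi}_i$---are routine and, notably, require neither the matching condition in line~\eqref{eq:matching-condition} nor Assumption~\ref{asmpt:perfect}.
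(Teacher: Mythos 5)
Your proposal is correct and follows essentially the same route as the paper, which proves the lemma by pointing to the derivation in Equations~\eqref{eq:sf-c-i}--\eqref{eq:sf-projection}: expand the successor features under the one-hot representation $\pmb{\phi}_i$, substitute $\pmb{e}_{c_i(s_t)} = \pmb{\Phi}_i \pmb{e}_{c^*(s_t)}$, and pull $\pmb{\Phi}_i$ out by linearity of expectation. Your extra paragraph verifying that each column of $\pmb{\Phi}_i$ has a single nonzero entry (so that line~\eqref{eq:latent-projection} actually holds, via the sub-clustering property) is a detail the paper leaves implicit, and it is a worthwhile addition.
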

\begin{proof}[Proof of SF Projection Lemma~\ref{lem:sf-projection}]
The proof is by the derivation in lines~\eqref{eq:sf-c-i} through~\eqref{eq:sf-projection}.
\end{proof}

\begin{proof}[Proof of Convergence Theorem~\ref{thm:convergence}]
The convergence proof argues by induction on the number of refinement iterations and first establishes that the Refinement Property~\ref{prop:refinement} and Reward-predictive Splitting Property~\ref{prop:reward-predictive} hold at every iteration.
Then we provide an argument that the returned cluster function is a maximally-compressed-reward-predictive representation.

\paragraph{Base case:}
The first clustering $c_1$ merges two state observations into the same cluster if they lead to equal one-step rewards for every action.
The reward-condition in Equation~\eqref{eq:reward-condition} can be satisfied by constructing a vector $\pmb{w}_a$ such that every entry equals the average predicted one-step reward for each partition and 
\begin{equation}
    \pmb{w}_a(i) = \frac{1}{| \{ s: c_1(s) = i \} |} \sum_{s: c_1(s) = i} f_r(s,a)
\end{equation}
By Assumption~\ref{asmpt:perfect}, all predictions made by $f_r$ are at most $\frac{\varepsilon_r}{2}$ apart from the correct value and therefore 
\begin{equation}
    | \pmb{e}_{c_1(s)}^\top \pmb{w}_a - \mathbb{E}_{p}[r(s,a,s') | s,a] | \le \varepsilon_r \label{eq:reward-condition-c-1}
\end{equation}
Consequently, the reward condition in Equation~\eqref{eq:reward-condition} is met and for any two states $s$ and $\tilde{s}$
\begin{equation}
    c_1(s) \ne c_1(\tilde{s}) \implies c^*(s) \ne c^*(\tilde{s})
\end{equation}
and Property~\ref{prop:reward-predictive} holds.
Property~\ref{prop:refinement} holds trivially because $c_1$ is the first constructed clustering.

\paragraph{Induction Hypothesis:}
For a clustering $c_i$ both Property~\ref{prop:refinement} and Property~\ref{prop:reward-predictive} hold.

\paragraph{Induction Step:}
To see why Property~\ref{prop:refinement} and~\ref{prop:reward-predictive} hold for a clustering $c_{i+1}$, we first
denote prediction errors with a vector $\pmb{\delta}_i$ and 
\begin{equation}
    \widehat{\pmb{\psi}}^\pi_i(s,a) = \pmb{\psi}^\pi_i(s,a) + \pmb{\delta}_i(s,a). \label{eq:prediction-error-vector}
\end{equation}
If two states $s$ and $\tilde{s}$ are merged into the same partition by a maximally-compressed-reward-predictive representation (and have equal SFs $\pmb{\psi}^\pi_*$), then
\begin{align}
    &|| \widehat{\pmb{\psi}}^\pi_i(s,a) - \widehat{\pmb{\psi}}^\pi_i(\tilde{s},a) || \\
    &\le || \pmb{\psi}^\pi_i(s,a) - \pmb{\psi}^\pi_i(\tilde{s},a) || + || \pmb{\delta}_i(s,a) - \pmb{\delta}_i(\tilde{s},a) || &\text{(by substituting~\eqref{eq:prediction-error-vector} and triangle ineq.)} \\
    &= || \pmb{\Phi}_i \pmb{\psi}^\pi_*(s,a) - \pmb{\Phi}_i  \pmb{\psi}^\pi_*(\tilde{s},a) || + \underbrace{|| \pmb{\delta}_i(s,a) - \pmb{\delta}_i(\tilde{s},a) ||}_{\text{$\le \frac{\varepsilon_
    \psi}{2} + \frac{\varepsilon_\psi}{2}$ by Assmpt.~\ref{asmpt:perfect}}} &\text{(by Lemma~\ref{lem:sf-projection})} \\
    &\le || \pmb{\Phi}_i || \cdot \underbrace{|| \pmb{\psi}^\pi_*(s,a) - \pmb{\psi}^\pi_*(\tilde{s},a) ||}_{\text{$=\pmb{0}$ by choice of $s$ and $\tilde{s}$}} + \varepsilon_\psi \\
    &= \varepsilon_\psi.
\end{align}
Consequently, 
\begin{equation}
    c^*(s) = c^*(\tilde{s}) \implies || \pmb{f}_i(s,a) - \pmb{f}_i(\tilde{s},a) || \le \varepsilon_\psi \implies c_{i+1}(s) = c_{i+1}(\tilde{s}). \label{eq:cluster-implication}
\end{equation}
By inversion of the implication in line~\eqref{eq:cluster-implication}, the Reward-predictive Splitting Property~\ref{prop:reward-predictive} holds.
Furthermore, because the matching condition in line~\eqref{eq:matching-condition} holds, we have for any two states
\begin{equation}
    c_i(s) \ne c_i(\tilde{s}) \implies || \pmb{\psi}_i^\pi(s,a) - \pmb{\psi}_i^\pi(\tilde{s},a) || > 3 \varepsilon_\psi.
\end{equation}
Consequently,
\begin{align}
    || \pmb{f}_i(s,a) - \pmb{f}_i(\tilde{s},a) || &= || (\pmb{\psi}^\pi_i(s,a) - \pmb{\psi}^\pi_i(\tilde{s},a)) - (\pmb{\delta}_i(\tilde{s},a) - \pmb{\delta}_i(s,a)) || \\
    &\ge \big| \underbrace{|| \pmb{\psi}^\pi_i(s,a) - \pmb{\psi}^\pi_i(\tilde{s},a) ||}_{> 3 \varepsilon_\psi} - \underbrace{|| \pmb{\delta}_i(\tilde{s},a) - \pmb{\delta}_i(s,a) ||}_{\le 2 \varepsilon_\psi} \big| &\text{(by inverse triangle ineq.)} \\
    &>  3 \varepsilon_\psi - 2 \varepsilon_\psi = \varepsilon_\psi.
\end{align}
Therefore, $c_{i+1}(s) \ne c_{i+1}(\tilde{s})$ and the Refinement Property~\ref{prop:refinement} holds as well.

Lastly, the clustering $c_T$ returned by Algorithm~\ref{alg:cluster} satisfies the conditions outlined in Definition~\ref{def:reward-predictive}.
Because the Refinement Property~\ref{prop:refinement} holds at every iteration, we have by line~\eqref{eq:reward-condition-c-1} that
\begin{equation}
    \big| \pmb{e}_{c_T(s)}^\top \pmb{w}_a - \mathbb{E}_{p}[r(s,a,s') | s,a] \big| \le \varepsilon_r
\end{equation}
and therefore $c_T$ satisfies the bound in line~\eqref{eq:reward-condition}.
Furthermore, because Algorithm~\ref{alg:cluster} terminates when $c_T$ and $c_{T-1}$ are identical, we have that
\begin{equation}
    c_T(s) = c_T(\tilde{s}) \iff \big|\big| \widehat{\pmb{\psi}}^\pi_T(s,a) - \widehat{\pmb{\psi}}^\pi_T(\tilde{s},a) \big|\big| \le \varepsilon_\psi.
\end{equation}
For this clustering, we can construct a set of matrices $\{ \widehat{\pmb{F}}_a \}_{a \in \mathcal{A}}$ by averaging the predicted SFs such that every row
\begin{equation}
    \widehat{\pmb{F}}_a(i) = \frac{1}{| \{ s : c_T(s) = i \} |} \sum_{s : c_T(s) = i} \widehat{\pmb{\psi}}^\pi_T(s,a). \label{eq:f-mat-avg}
\end{equation}
For every observed state-action pair $(s,a)$
\begin{align}
    \big|\big| \pmb{e}_{c_T(s)}^\top \widehat{\pmb{F}}_a - \pmb{\psi}^\pi_T(s,a) \big|\big| &= \big|\big| \pmb{e}_{c_T(s)}^\top \widehat{\pmb{F}}_a - \widehat{\pmb{\psi}}^\pi_i(s,a) + \pmb{\delta}_i(s,a) \big|\big| &\text{(by line~\eqref{eq:prediction-error-vector})} \\
    &\le \underbrace{\big|\big| \pmb{e}_{c_T(s)}^\top \widehat{\pmb{F}}_a - \widehat{\pmb{\psi}}^\pi_i(s,a) \big|\big|}_{\text{$\le \varepsilon_\psi$ by~\eqref{eq:f-mat-avg}}} + \underbrace{\big|\big| \pmb{\delta}_i(s,a) \big|\big|}_{\text{$\le \frac{\varepsilon_\psi}{2}$ by Assmpt~\ref{asmpt:perfect}}} \\
    &\le \frac{3}{2} \varepsilon_\psi
\end{align}
and therefore the SF condition in line~\eqref{eq:sf-condition} holds as well (up to a rescaling of the $\varepsilon_\psi$ hyper-parameter).

\end{proof}

\section{Experiments}\label{app:experiments}

\subsection{Reward-predictive clustering experiments}

In Section~\ref{sec:experiments}, the clustering algorithm was run on a fixed trajectory dataset that was generated by selecting actions uniformly at random. 
In the Column World task, a start state was sampled uniformly at random from the right column.
In the Combination Lock task the start state was always the combination $(0,0,0)$.
MNIST images were always sampled uniformly at random from the training or test sets (depending on the experiment phase).

For the Column World experiment a three layer fully connected neural network was used with ReLU activation functions. 
The two hidden layers have a dimension of 1000 (the output dimension depends on the number of latent states and actions).
In the Combination Lock experiment the ResNet18 architecture was used by first reshaping the state image into a stack of three digit images and then feeding this image into the ResNet18 model.
For all experiments the weights of the ResNet18 model were initialized at random (we did not use a pre-trained model).
The 1000 dimensional output of this model was then passed through a ReLU activation function and then through a linear layer.
The output dimension varied depending on the quantity the network is trained to predict during clustering.
Only the top-most linear layer was re-trained between different refinement iterations, the weights of the lower layers (e.g. the ResNet18 model) were re-used across different refinement iterations.
All experiments were implemented in PyTorch~\citep{pytorch} and all neural networks were optimized using the Adam optimizer~\citep{kingma2014adam}.
We always used PyTorch's default network weight initialization heuristics and default values for the optimizer and only varied the learning rate.
Mini-batches were sampled by shuffling the data set at the beginning of every epoch.
Table~\ref{tab:cluster-hyper-parameter} lists the used hyper-parameter.

\begin{table}[!ht]
    \caption{Hyper-parameter settings for both clustering algorithms} 
    \label{tab:cluster-hyper-parameter}
    \centering 
    \begin{tabular}{l  l  l}
        \hline
        {\bf Parameter} & {\bf Column World} & {\bf Combination Lock} \\ 
        \hline
        Batch size                              & 32    & 256    \\
        Epochs, reward refinement               & 5     & 10     \\
        Epochs, SF refinement                   & 5     & 20     \\
        Epochs, representation network training & 5     & 20     \\
        Learning rate                           & 0.005 & 0.001  \\
        $\varepsilon_r$                         & 0.5   & 0.4    \\
        $\varepsilon_\psi$                      & 1.0   & 0.8    \\
        Spurious latent state filter fraction   & 0.01  & 0.0025 \\
        Number of training trajectories         & 1000  & 10000  \\
        \hline
    \end{tabular}
\end{table}

\subsection{DQN experiments}

All experiments in Figure~\ref{fig:combination-lock-transfer} were repeated 20 times and each agent spent 100 episodes in each task.
To select actions, an $\varepsilon$-greedy exploration strategy was used that selects actions with $\varepsilon$ probability greedily (with respect to the Q-value predictions) and with $1 - \varepsilon$ actions are selected uniformly at random.
During the first episode in each training and test task, $\varepsilon = 0$ and the $\varepsilon$ was linearly increase to 1 within 10 time steps.
The DQN agent always used a Q-network architecture consisting of the ResNet18 architecture (with random weight initialization), a ReLU activation function, and then a fully connected layer to predict Q-values for each action (as illustrated in Figure~\ref{fig:combination-lock-dqn}).
Table~\ref{tab:dqn-hyper-parameter} outlines the hyper-parameters that were fine tuned for the combination lock training task.
These hyper-parameters were then re-used for all DQN variants used in Section~\ref{sec:improving-learning-efficiency}.

\begin{table}[!ht]
    \caption{Hyper-parameter sweep results for DQN on the combination lock training task.}
    \label{tab:dqn-hyper-parameter}
    \centering
    \begin{tabular}{l  l  l}
        {\bf Parameter} & {\bf Tested Values} & {\bf Best Setting (highest reward-per-step score)} \\
        \hline
        Learning rate & $10^{-4}$, $10^{-3}$, $10^{-2}$, $10^{-1}$ & $10^{-3}$ \\
        Batch size & 100, 200, 500 & 200 \\
        Buffer size & 100, 1000, 10000 & 10000 \\
        Exploration episodes & 5, 10, 20, 50, 80 & 10 \\
        \hline
    \end{tabular}
\end{table}

\end{appendices}

\end{document}